
\documentclass{article}

\usepackage{microtype}
\usepackage{graphicx}
\usepackage{subfig}
\usepackage{booktabs} 

\usepackage{hyperref}


\usepackage[accepted]{icml2019} 

\icmltitlerunning{Analogies Explained: Towards Understanding Word Embeddings}

\usepackage[utf8]{inputenc} 
\usepackage[T1]{fontenc}    
\usepackage{hyperref}       
\usepackage{url}            
\usepackage{booktabs}       
\usepackage{amsfonts}       
\usepackage{amssymb, amsmath, amsthm}
\usepackage{thmtools,thm-restate}
\usepackage{bm}
\usepackage{nicefrac}       
\usepackage{microtype}      
\graphicspath{{figures/}}
\usepackage{tikz}           
\usetikzlibrary{matrix}
\usetikzlibrary{automata,positioning}
\usetikzlibrary{arrows}
\usepackage{multirow}
\usepackage{xfrac}
\usepackage{relsize}
\usepackage[title]{appendix}
\usepackage{enumitem}       
\usepackage[hang,flushmargin]{footmisc}

\newcommand{\keypoint}[1]{\vspace{0.1cm}\noindent\textbf{#1}\quad}

\newcommand{\sm}{\scalebox{0.7}[1.0]{\( - \)}}

\newcommand{\vv}[1]{\mathbf{w}_{#1}}
\newcommand{\vtv}[2]{{\mathbf{v}_{#1}}^\top\mathbf{v}_{#2}'}
\newcommand{\RR}[1]{\mathbb{R}^{#1}}
\newcommand{\winW}{w_i\!\in\!\mathcal{W}}
\newcommand{\bmu}[1]{\bm{#1}}
\newcommand{\msc}[1]{\mathsmaller{\mathcal{#1}}}

\newtheorem{theorem}{Theorem}
\newtheorem{corollary}{Corollary}[theorem]
\newtheorem{lemma}{Lemma} 
 
\newtheorem{assumption}{A\!} 
\newtheorem{definition}{Definition D\!}

\DeclareMathOperator*{\argmin}{argmin}
\DeclareMathOperator*{\argmax}{argmax}

\begin{document}

\twocolumn[
\icmltitle{Analogies Explained: Towards Understanding Word Embeddings}



\icmlsetsymbol{equal}{*}

\begin{icmlauthorlist}
\icmlauthor{Carl Allen}{ed}
\icmlauthor{Timothy Hospedales}{ed}
\end{icmlauthorlist}

\icmlaffiliation{ed}{School of Informatics, University of Edinburgh}

\icmlcorrespondingauthor{Carl Allen}{carl.allen@ed.ac.uk}

\icmlkeywords{Machine Learning, Embedding, NLU, Representation}

\vskip 0.3in
]



\printAffiliationsAndNotice{}  

\begin{abstract}
    Word embeddings generated by neural network methods such as \textit{word2vec} (W2V) are well known to exhibit seemingly linear behaviour, e.g. the embeddings of analogy \emph{``woman is to queen as man is to king''} approximately describe a parallelogram. This property is particularly intriguing since the embeddings are not trained to achieve it. Several explanations have been proposed, but each introduces assumptions that do not hold in practice.
    We derive a probabilistically grounded definition of \textit{paraphrasing} that we re-interpret as \textit{word transformation}, a mathematical description of \emph{``$w_x$ is to $w_y$''}. From these concepts we prove existence of linear relationships between W2V-type embeddings that underlie the analogical phenomenon, identifying explicit error terms.
\end{abstract}

\section{Introduction}

The vector representation, or \textit{embedding}, of words underpins much of modern machine learning for natural language processing (e.g. \citet{turney2010frequency}). Where, previously, embeddings were generated explicitly from word statistics, neural network methods are now commonly used to generate \textit{neural embeddings} that are of low dimension relative to the number of words represented, yet achieve impressive performance on downstream tasks (e.g. \citet{turian2010word, socher2013parsing}). Of these, \textit{word2vec}\footnote[2]{Throughout, we refer to the more commonly used \textit{Skipgram} implementation of W2V with negative sampling (SGNS).} (W2V) \citep{mikolov2013distributed} and \textit{Glove} \cite{pennington2014glove} are amongst the best known and on which we focus.

Interestingly, such embeddings exhibit seemingly linear behaviour \cite{mikolov2013linguistic, levy2014linguistic}, e.g. the respective embeddings of \textit{analogies}, or word relationships of the form \emph{``$w_a$ is to $w_{a^*}$ as $w_b$ is to $w_{b^*\!}$''}, often satisfy $\mathbf{w}_{a^*}-\mathbf{w}_{a}+\mathbf{w}_{b}\approx\mathbf{w}_{b^*}$, where $\mathbf{w}_{i}$ is the embedding of word $w_i$. This enables analogical questions such as ``\emph{man is to king as woman is to ..?}'' to be solved by vector addition and subtraction. Such high order structure is surprising since word embeddings are trained using only pairwise word co-occurrence data extracted from a text corpus.

We first show that where embeddings factorise \textit{pointwise mutual information} (PMI), it is \textit{paraphrasing} that determines when a linear combination of embeddings equates to that of another word. We say $king$ paraphrases $man$ and $royal$, for example, if there is a semantic equivalence between $king$ and $\{man, royal\}$ combined. We can measure such equivalence with respect to probability distributions over nearby words, in line with Firth's maxim ``\emph{You shall know a word by the company it keeps}'' \cite{firth1957synopsis}. 
We then show that paraphrasing can be reinterpreted as \textit{word transformation} with additive \textit{parameters} (e.g. from $man$ to $king$ by adding $royal$) and generalise to also allow subtraction. 
Finally, we prove that by interpreting an analogy \emph{``$w_a$ is to $w_{a^*}$ as $w_b$ is to $w_{b^*}$''} as word transformations $w_a$ to $w_{a^*}$ and $w_b$ to $w_{b^*}$ sharing the same parameters, the linear relationship observed between word embeddings of analogies follows (see overview in Fig \ref{fig:summary}). 
Our key contributions are:
\vspace{-0.30cm}
\begin{itemize}[leftmargin=.5cm]
    \item to derive a probabilistic definition of \textit{paraphrasing} and show that it governs the relationship between one (PMI-derived) word embedding and any sum of others;
\vspace{-0.25cm}
    \item to show how paraphrasing can be generalised and interpreted as the \textit{transformation} from one word to another, giving a mathematical formulation for \emph{``$w_x$ is to $w_{x^*\!}$''};
\vspace{-0.25cm}
    \item to provide the first rigorous proof of the linear relationship between word embeddings of analogies, including explicit, interpretable error terms; and
\vspace{-0.25cm}
    \item to show how these relationships materialise between vectors of PMI values, and so too in word embeddings that factorise the PMI matrix, or approximate such a factorisation e.g. W2V and \textit{Glove}.
\end{itemize}
\begin{figure}[t!]
    \centering
    \includegraphics[width=\linewidth, trim=1.25cm 2.0cm 1cm 2.9cm, clip]{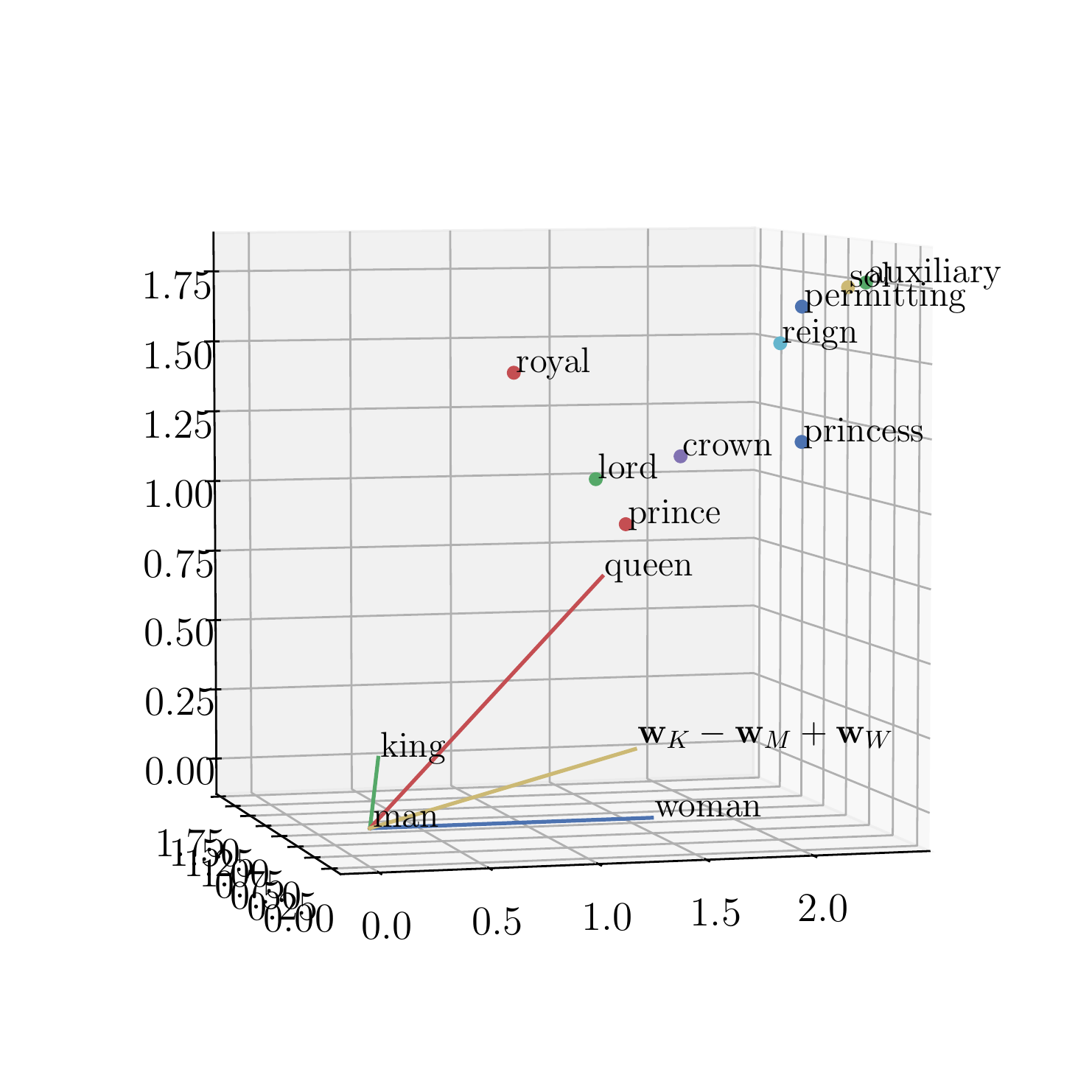}
    \vspace{-10pt}
    \caption{The relative locations of word embeddings for the analogy "\textit{man} is to \textit{king} as \textit{woman} is to ..?". The closest embedding to the linear combination  $\mathbf{w}_{K}-\mathbf{w}_{M}+\mathbf{w}_{W}$ is that of \textit{queen}. We explain why this occurs and interpret the difference between them.}
    \label{fig:the_gap}
\vspace{-5pt}
\end{figure}

\section{Previous Work}
%
Intuition for the presence of linear analogical relationships, or \textit{linguistic regularity}, amongst word embeddings was first suggested by \citet{mikolov2013distributed, mikolov2013linguistic} and \citet{pennington2014glove}, and has been widely discussed since (e.g. \citet{levy2014linguistic, linzen2016issues}).
More recently, several theoretical explanations have been proposed:
\begin{itemize}[leftmargin=.5cm]
	\item \citet{arora2016latent} propose a latent variable model for language that contains several strong \textit{a priori} assumptions about the spatial distribution of word vectors, discussed by \citet{gittens2017skip}, that we do not require. 
    Also, the two embedding matrices of W2V are assumed equal, which we show to be false in practice.
\vspace{-0.2cm}
    \item \citet{gittens2017skip} refer to \textit{paraphrasing}, from which we draw inspiration, but make several assumptions that fail in practice: (i) that words follow a uniform distribution rather than the (highly non-uniform) Zipf distribution; (ii) that W2V learns a conditional distribution -- violated by negative sampling \cite{levy2014neural}; and (iii) that joint probabilities beyond pairwise co-occurrences are zero. 
\vspace{-0.2cm}
    \item \citet{ethayarajh2018towards} offer a recent explanation based on \textit{co-occurrence shifted PMI}, however that property lacks motivation and several assumptions fail, e.g.
    it requires more than for opposite sides to have equal length to define a parallelogram in $\mathbb{R}^d\!,\ d>2$ (their Lemma 1).
\end{itemize}
\vspace{-0.25cm}
To our knowledge, no previous work mathematically interprets analogies so as to rigorously explain why if \emph{``$w_a$ is to $w_{a^*}$ as $w_b$ is to $w_{b^*\!}$''} then a linear relationship manifests between correponding word embeddings.

\section{Background}\label{sec:bg}

%
The \textbf{Word2Vec} algorithm considers a set of word pairs $\smash{\{\smash{(w_{i_k}, c_{j_k})}\}_k}$ generated from a (typically large) text corpus, by allowing the \textit{target} word $w_i$ to range over the corpus, and the \textit{context} word $c_j$ to range over a context window (of size $l$) symmetric about the target word. For each observed word pair (\textit{positive sample}), $k$ random word pairs (\textit{negative samples}) are generated according to monogram distributions. 
The 2-layer ``neural network'' architecture simply multiplies two weight matrices $\mathbf{W}, \mathbf{C}\!\in\!\mathbb{R}^{d\times n}$, subject to a non-linear (sigmoid) function, where $d$ is the embedding dimensionality and $n$ is the size of $\mathcal{E}$ the dictionary of unique words in the corpus.
Conventionally, $\mathbf{W}$ denotes the matrix closest to the input target words.
Columns of $\mathbf{W}$ and $\mathbf{C}$ are the \textit{embeddings} of words in $\mathcal{E}$: $\mathbf{w}_{i}\!\in\!\mathbb{R}^{d}$ ($i^\text{th}$ column of $\mathbf{W}$) corresponds to $w_i$ the $i^{th}$ word in $\mathcal{E}$ observed as a target word; and $\mathbf{c}_{i}\!\in\!\mathbb{R}^{d}$ ($i^\text{th}$ column of $\mathbf{C}$) corresponds to $\smash{c_i}$, the same word when observed as a context word.

\citet{levy2014neural} identified that the objective function for W2V is optimised if:
\vspace{-5pt}
\begin{equation}\label{eq:w2v_pmi}
    \mathbf{w}_{i}^\top\mathbf{c}_{j} 
    \  = \
    \text{PMI}(w_i,c_j) - \log{k}\ ,
    \vspace{-5pt}
\end{equation}
where $\text{PMI}(w_i,c_j)\!=\!\log{\tfrac{p(w_i,\,c_j)}{p(w_i)p(c_j)}}$ is known as \textit{pointwise mutual information}.
In matrix form, this equates to:
\vspace{-5pt}
\begin{equation}\label{eq:w2v_pmi_mx}
    \mathbf{W}^\top\mathbf{C} 
    \   = \ 
    \mathbf{SPMI}\ \in\mathbb{R}^{n\times n}\ ,
    \vspace{-5pt}
\end{equation}
\noindent where $\mathbf{SPMI}_{i,j}\!=\!\textup{PMI}(w_i,c_j)\!-\! \log{k}$, (\textit{shifted} PMI). 

%
\textbf{\textit{Glove}} \citep{pennington2014glove} has the same architecture as W2V. Its embeddings perform comparably and also exhibit linear analogical structure. \textit{Glove}'s loss function is optimised when:
\vspace{-5pt}
\begin{equation}\label{eq:glove}
    \mathbf{w}_{i}^\top \mathbf{c}_{j}=\log p(w_i, c_j) - b_i - b_j + \log Z   
    \vspace{-5pt}
\end{equation}
for biases $b_i$, $b_j$ and normalising constant $Z$.
(\ref{eq:glove}) generalises (\ref{eq:w2v_pmi}) due to the biases, giving \textit{Glove} greater flexibility than W2V and a potentially wider range of solutions. However, we will show that it is factorisation of the PMI matrix that causes linear analogical structure in embeddings, as approximately achieved by W2V (\ref{eq:w2v_pmi}). We conjecture that the same rationale underpins analogical structure in \textit{Glove} embeddings, perhaps more weakly due to its increased flexibility.

\section{Preliminaries}\label{sec:P2V}
We consider pertinent aspects of the relationship between word embeddings and co-occurrence statistics (\ref{eq:w2v_pmi}, \ref{eq:w2v_pmi_mx}) relevant to the
linear structure between embeddings of analogies:

\keypoint{Impact of the \textit{Shift}}
As a chosen hyper-parameter, reflecting nothing of word properties, any effect on embeddings of $k$ appearing in (\ref{eq:w2v_pmi}) is arbitrary. Comparing typical values of $k$ with empirical PMI values (Fig \ref{fig:pmi_hist}), shows that the so-called \textit{shift} $(\sm\log k)$ may also be material.
\begin{figure}[t]
  \centering
    \begin{minipage}{0.4\textwidth}
        \centering
        \includegraphics[width=1\textwidth]{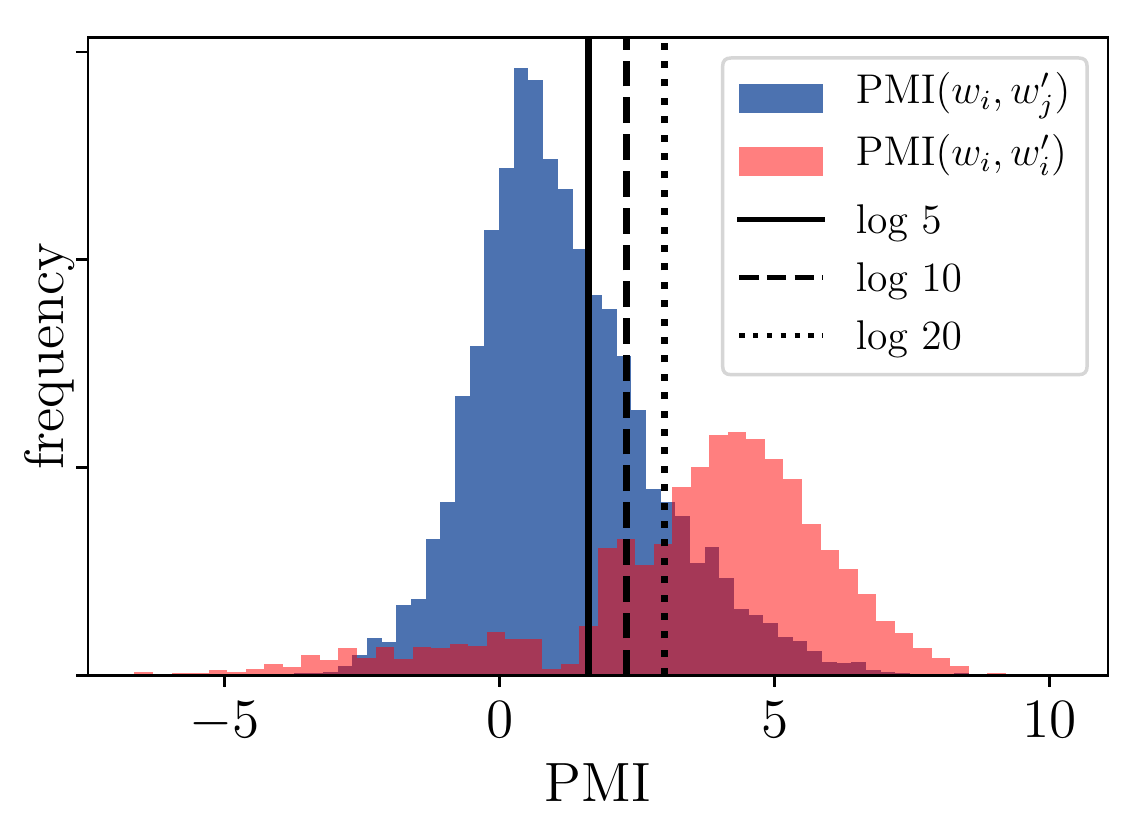}
    \end{minipage}
    \vspace{-10pt}
    \captionsetup{justification=centering,margin=0.01cm}
    \caption{Histogram of $\text{PMI}(w_i, c_j)$ for word pairs randomly sampled from text (blue) with $\text{PMI}(w_i, c_i)$ for the \textit{same word} overlaid (red, scale enlarged). The \textit{shift} is material for typical values of $k$.}
    \label{fig:pmi_hist}
    \vspace{-5pt}
\end{figure}
Further, it is observed that adjusting the W2V algorithm to avoid any direct impact of the \textit{shift} improves embedding performance \cite{minh}. 
We conclude that the \textit{shift} is a detrimental artefact of the W2V algorithm and, unless stated otherwise, consider embeddings that factorise the \textit{unshifted} PMI matrix:
\vspace{-0.25cm}
\begin{align}\label{eq:w2v_clean}
    \mathbf{w}_i^\top\mathbf{c}_j
     \  = \ 
    \text{PMI}(w_i,c_j)		
    \quad\text{or}\quad
    \mathbf{W}^\top\mathbf{C} 
     \  = \ 
    \mathbf{PMI}\ .
\end{align}
\keypoint{Reconstruction Error}
In practice, (\ref{eq:w2v_pmi_mx}) and (\ref{eq:w2v_clean}) hold only \textit{approximately} since $\mathbf{W}^\top\mathbf{C}\in\RR{n\!\times\!n}$ is rank-constrained (rank $r\!\ll\! d\!<\!n$)  relative to the factored matrix $\mathbf{M}$, e.g. $\mathbf{M}\!=\!\mathbf{PMI}$ in (\ref{eq:w2v_clean}). Recovering elements of $\mathbf{M}$ from $\mathbf{W}$ and $\mathbf{C}$ is thus subject to \textit{reconstruction error}. However, we rely throughout on linear relationships in $\RR{n}$, requiring only that they are sufficiently maintained when projected ``down'' into $\RR{d}$, the space of embeddings. To ensure this, we assume:
\begin{assumption}\label{ass:full_rank}
	$\mathbf{C}$ has full row rank.
\end{assumption}
\begin{assumption}\label{ass:pseudo-inv}
Letting $\mathbf{M}_k$ denote the $k^{th}$ column of factored matrix $\mathbf{M}\!\in\!\RR{n\times n}$, the projection $f\!\!:\!\!\RR{n}\!\rightarrow\!\RR{d}$, $f(\mathbf{M}_{i})\!=\!\mathbf{w}_{i}$ is approximately homomorphic with respect to addition, i.e. $f(\mathbf{M}_{i}+\mathbf{M}_{j}) \approx f(\mathbf{M}_{i})+f(\mathbf{M}_{j})$.
\end{assumption}
\vspace{-0.20cm}
\noindent
A\ref{ass:full_rank} is reasonable since $d\!\ll\! n$ and $d$ is chosen. 
A\ref{ass:pseudo-inv} means that, whatever the factorisation method used (e.g. analytic, W2V, \textit{Glove}, weighted matrix factorisation \cite{srebro2003weighted}), linear relationships between columns of $\mathbf{M}$ are sufficiently preserved by columns of $\mathbf{W}$, i.e. the embeddings $\mathbf{w}_{i}$. 
For example, minimising a least squares loss function gives the linear projection $\mathbf{w}_{i}\!=\!f_{LSQ}(\mathbf{M}_i)\!=\!\mathbf{C}^\dagger\mathbf{M}_i$ for which A\ref{ass:pseudo-inv} holds exactly (where $\mathbf{C}^\dagger\!=\!(\mathbf{CC}^\top)^{-1}\mathbf{C}$, the \textit{Moore-Penrose pseudo-inverse} of $\mathbf{C}^\top\!$, which exists by A\ref{ass:full_rank});\footnote{
\textit{w.l.o.g.} we write $f(\cdot)\!=\!\mathbf{C}^\dagger(\cdot)$ throughout (except in specific cases) to emphasise linearity of the relationship.}
whereas for W2V, $\mathbf{w}_{i}\!=\!f_{W2V}(\mathbf{M}_i)$ is non-linear.\footnote{
It is beyond the scope of this work to show A\ref{ass:pseudo-inv} is satisfied when the W2V loss function is minimised (\ref{eq:w2v_clean}). We instead prove existence of linear relationships in the full rank space of PMI columns,  thus in linear projections thereof, and assume A\ref{ass:pseudo-inv} holds sufficiently for W2V embeddings given (\ref{eq:w2v_pmi_mx}) and empirical observation of linearity.}

\keypoint{Zero Co-occurrence Counts}
The co-occurrence of rare words are often unobserved, thus their empirical probability estimates zero and PMI estimates undefined. 
However, for a fixed dictionary $\mathcal{E}$, such zero counts decline as the corpus or context window size increase (the latter can be arbitrarily large if more distant words are down-weighted, e.g. \citet{pennington2014glove}). Here, we consider small word sets $\mathcal{W}$ and assume the corpus and context window to be of sufficient size that the \textit{true} values of considered probabilities are non-zero and their PMI values well-defined, i.e.:
%
\begin{assumption}\label{ass:all_log_probs}
$p(\mathcal{W})\!>\!0,\ \ \forall\mathcal{W}\!\subseteq\!\mathcal{E}, |\mathcal{W}|\!<\!l, $
\end{assumption}
\vspace{-0.20cm}
where (throughout) ``$|\mathcal{W}|\!<\!l$'' means $|\mathcal{W}|$  \textit{sufficiently less} than $l$.

\keypoint{The Relationship between \textbf{W} and \textbf{C}}
Several works (e.g. \citet{hashimoto2016word, arora2016latent}) assume embedding matrices $\mathbf{W}$ and $\mathbf{C}$ to be equal, i.e. $\mathbf{w}_i\!=\!\mathbf{c}_i\ \,\forall i$. 
The assumption is convenient as the number of parameters is halved, equations simplify and consideration of how to use $\mathbf{w}_i$ and $\mathbf{c}_i$ falls away. 
However, this implies $\mathbf{W}^\top\mathbf{W}=\mathbf{PMI}$, requiring $\mathbf{PMI}$ to be positive semi-definite,
which is not true for typical corpora.
Thus $\mathbf{w}_i$, $\mathbf{c}_i$ are not equal and modifying W2V to enforce them to be would unnecessarily constrain and may well worsen the low-rank approximation.

\section{Paraphrases}\label{sec:paraphrases}

Following a similar approach to \citet{gittens2017skip}, we consider a small set of target words $\mathcal{W}\!=\!\{w_1, \ldots, w_m\}\!\subseteq\!\mathcal{E}$, $|\mathcal{W}|\!< l$; and the sum of their embeddings $\vv{\msc{W}}\!=\!\sum_i\mathbf{w}_{i}$. 
In practice, we say word $w_*\!\in\!\mathcal{E}$ \textit{paraphrases} $\mathcal{W}$ if $w_*$ and $\mathcal{W}$ are semantically interchangeable within the text, i.e. in circumstances where \textit{all} $\winW$ appear, $w_*$ could appear instead. This suggests a relationship between the probability distributions $p(c_j|\mathcal{W})$ and $p(c_j|w_*)$, $\forall c_j\!\in\!\mathcal{E}$. We refer to such conditional distributions over all context words as the \textit{distribution} \textit{induced} by $\mathcal{W}$ or $w_*$, respectively. 
\vspace{-5pt}

\subsection{Defining a Paraphrase}\label{sec:paraphrase_ID}

Let $\mathcal{C}_\msc{W}\!=\!\{c_{j_1}, \ldots, c_{j_t}\}$ be a sequence of words (with repetition) observed in the context of $\mathcal{W}$.\footnote{By symmetry, $\mathcal{C}_\msc{W}$ is the set of target words for which all $\winW$ are simultaneously observed in the context window.} A \textit{paraphrase word} $w_*\!\in\!\mathcal{E}$ can be thought of as that which \textit{best explains} the observation of  $\mathcal{C}_\msc{W}$. 
From a maximum likelihood perspective we have $w_*^{\mathsmaller{(1)}} \!=\!\argmax_{w_i\in\mathcal{E}} p(\mathcal{C}_\msc{W}|w_i)$. 
Assuming $c_j\!\in\!\mathcal{C}_\msc{W}$ to be independent draws from $p(c_j|\mathcal{W})$, gives:
\vspace{-2pt}
\begin{align*}
    w_*^{\mathsmaller{(1)}} 
     \ = \ &
    \argmax_{w_i}\,\mathsmaller{\prod_{c_j\in\mathcal{E}}}\,p(c_j|w_i)^{\#_j}
    \nonumber\\
     \ \to \, &
     \argmax_{w_i}\,\mathsmaller{\sum_{c_j\in\mathcal{E}}}\,p(c_j|\mathcal{W})\log p(c_j|w_i)\ ,
\end{align*}
as $|\,\mathcal{C}_\msc{W}|\!\to\!\infty$, where $\#_j$ denotes the count of $c_j$ in $\mathcal{C}_\msc{W}$. 
It follows that $w_*^{\mathsmaller{(1)}}$ minimises the Kullback-Leibler (KL) divergence $\Delta_{\mathsmaller{KL}}^{\msc{W}, w_*}$ between the induced distributions, i.e.:
\vspace{-2pt}
\begin{align*}
    \Delta_{\mathsmaller{KL}}^{\msc{W}, w_*}
     & \ = \ 
    D_{\mathsmaller{KL}}[\,P(c_j| \mathcal{W})\,||\,P(c_j| w_*)\,]\\
     & \ = \ 
    \mathsmaller{\sum_j} p(c_j| \mathcal{W})
       \log \tfrac{p(c_j| \mathcal{W})}{ p(c_j| w_*)} \ .
\end{align*}
Alternatively, we might consider $w_*^{\mathsmaller{(2)}}\!$, the target word whose set of associated context words 
$\mathcal{C}_{w_*}$ is best explained by $\mathcal{W}$, in the sense that $w_*^{\mathsmaller{(2)}}\!$ minimises KL divergence  $\Delta_{\mathsmaller{KL}}^{\mathsmaller{w_*, \mathcal{W}}} \!=\! D_{\mathsmaller{KL}}[P(c_j| w_*)\,||\,P(c_j| \mathcal{W})]$ (where, in general, $\Delta_{\mathsmaller{KL}}^{\mathsmaller{\mathcal{W}, w_*}} \!\neq\! \Delta_{\mathsmaller{KL}}^{\mathsmaller{w_*, \mathcal{W}}}$). 
Interpretations of $w_*^{\mathsmaller{(1)}}$ and $w_*^{\mathsmaller{(2)}}$ are discussed in Appendix \ref{app:KL}. In each case, the KL divergence lower bound (zero) is achieved \textit{iff} the induced distributions are equal, providing a theoretical basis for:

\begin{definition}\label{def:paraphrase}
	We say word $w_*\!\in\!\mathcal{E}$ \underline{paraphrases} word set $\mathcal{W}\!\subseteq\!\mathcal{E}$,  $|\mathcal{W}|\!<\! l$, if the \underline{paraphrase error}  $\bmu{\rho}^{\msc{W}, w_*}\!\in\! \RR{n}$ is (element-wise) small, where:
\vspace{-5pt}
\begin{equation*}
    \bmu{\rho}^{\msc{W}, w_*}_j   \ =\ 
    \log \tfrac{p(c_j|w_*)}{p(c_j|\mathcal{W})}\ , c_j\!\in\!\mathcal{E}. 
\end{equation*}
\end{definition}
\vspace{-10pt}
Note that $\mathcal{W}$ and $w_*$ need not appear similarly often for $w_*$ to paraphrase $\mathcal{W}$, only amongst the same context words. We now connect paraphrasing, a semantic relationship, to relationships between word embeddings.

\subsection{Paraphrase = Embedding Sum + Error}
\begin{restatable}{lemma}{primelemma}
\label{lem:paraphrase}
    For any word $w_*\!\in\!\mathcal{E}$ and word set 
    $\mathcal{W}\!\subseteq\!\mathcal{E}$, $|\mathcal{W}|\!<\!l$:
    \vspace{-5pt}
    \begin{equation}\label{eq:lem1}
        \textup{PMI}_{*}
        =
        \sum_{\winW}\textup{PMI}_{i}
         + \bmu{\rho}^{\msc{W}, w_*}
         + \bmu{\sigma}^{\msc{W}}
         - \tau^\msc{W}\bmu{1}
         \ ,
    \vspace{-5pt}
    \end{equation}
where $\textup{PMI}_{\bullet}$ is the column of $\mathbf{PMI}$ corresponding to $w_{\bullet}\!\in\!\mathcal{E}$,
$\bmu{1}\!\in\!\RR{n}$ is a vector of 1s, and \textit{error} terms
    $\bmu{\sigma}^{\msc{W}}_j \!=\! \log\tfrac{p(\mathcal{W}|c_j)}{\prod_i p(w_i|c_j)}$ and
    $\tau^\msc{W}\!=\!\log \tfrac{p(\mathcal{W})}{\prod_i p(w_i)}$.
\end{restatable}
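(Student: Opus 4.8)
The plan is to prove (\ref{eq:lem1}) by direct manipulation of logarithms, using only the definitions of the terms appearing in the statement together with Bayes' rule; Assumption~\ref{ass:all_log_probs} ensures that every probability involved (including the joint $p(\mathcal{W},c_j)$, since $|\mathcal{W}|\!<\!l$) is strictly positive, so all logarithms are well defined, and it suffices to verify the identity coordinate-wise, i.e. for each fixed $c_j\!\in\!\mathcal{E}$.

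First I would strip off the paraphrase error. By Definition~\ref{def:paraphrase}, $\bmu{\rho}^{\msc{W}, w_*}_j = \log p(c_j|w_*) - \log p(c_j|\mathcal{W})$, hence $\log p(c_j|w_*) = \log p(c_j|\mathcal{W}) + \bmu{\rho}^{\msc{W}, w_*}_j$; subtracting $\log p(c_j)$ from both sides and recalling $\textup{PMI}_{*,j}=\log\tfrac{p(c_j|w_*)}{p(c_j)}$ reduces the claim to showing $\log\tfrac{p(c_j|\mathcal{W})}{p(c_j)} = \sum_{\winW}\textup{PMI}_{i,j} + \bmu{\sigma}^{\msc{W}}_j - \tau^\msc{W}$. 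The next step is a Bayes flip, $\tfrac{p(c_j|\mathcal{W})}{p(c_j)} = \tfrac{p(\mathcal{W}|c_j)}{p(\mathcal{W})}$ (both equal $\tfrac{p(\mathcal{W},c_j)}{p(\mathcal{W})p(c_j)}$), followed by introducing the product of marginals into numerator and denominator and compensating with exactly the two stated error terms: $\log\tfrac{p(\mathcal{W}|c_j)}{p(\mathcal{W})} = \log\tfrac{\prod_i p(w_i|c_j)}{\prod_i p(w_i)} + \bmu{\sigma}^{\msc{W}}_j - \tau^\msc{W}$, which is immediate from $\bmu{\sigma}^{\msc{W}}_j = \log\tfrac{p(\mathcal{W}|c_j)}{\prod_i p(w_i|c_j)}$ and $\tau^\msc{W} = \log\tfrac{p(\mathcal{W})}{\prod_i p(w_i)}$. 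Finally $\log\tfrac{\prod_i p(w_i|c_j)}{\prod_i p(w_i)} = \sum_i \log\tfrac{p(w_i|c_j)}{p(w_i)} = \sum_i \log\tfrac{p(w_i,c_j)}{p(w_i)p(c_j)} = \sum_{\winW}\textup{PMI}_{i,j}$, which closes the coordinate-wise identity; stacking over all $c_j\!\in\!\mathcal{E}$ yields the stated equality in $\RR{n}$.

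I do not expect a genuine obstacle: the identity is exact and amounts to careful bookkeeping of logarithms. The one point worth flagging is conceptual rather than technical: the step that replaces $p(\mathcal{W}|c_j)$ and $p(\mathcal{W})$ by products $\prod_i p(w_i|c_j)$ and $\prod_i p(w_i)$ is \emph{not} an independence approximation — it is made exact by the very definitions of $\bmu{\sigma}^{\msc{W}}$ and $\tau^\msc{W}$, which measure the failure of conditional (resp. marginal) independence among the words of $\mathcal{W}$ and vanish precisely when those words are mutually independent given each context word (resp. marginally). It is also worth noting that $\bmu{\rho}$ appears with no sign change and that (\ref{eq:lem1}) is a genuine equality; the word ``error'' reflects only that $\bmu{\rho}^{\msc{W},w_*}$ and $\bmu{\sigma}^{\msc{W}}$ are expected to be small when $w_*$ really paraphrases $\mathcal{W}$, while $\tau^\msc{W}\bmu{1}$ is constant across coordinates.
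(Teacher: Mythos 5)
Your proof is correct and is essentially the paper's own argument (Appendix~\ref{app:paraphrase_alt_proof}): both decompose the coordinate-wise difference between $\textup{PMI}_{*,j}$ and $\sum_i\textup{PMI}_{i,j}$ by passing through the joint event $\mathcal{W}$ via $p(c_j|\mathcal{W})$, $p(\mathcal{W}|c_j)$, $p(\mathcal{W})$, with $\bmu{\rho}^{\msc{W},w_*}$, $\bmu{\sigma}^{\msc{W}}$, $\tau^{\msc{W}}$ absorbing exactly the discrepancies, the only difference being that you peel off $\bmu{\rho}$ first while the paper introduces the joint terms as ratios equal to one and rearranges. No gaps.
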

\vspace{-5pt}
\begin{proof}
(See Appendix \ref{app:paraphrase_alt_proof}.) As Lem~\ref{lem:paraphrase} is central to what follows, we sketch its proof: a correspondence is drawn between the product of distributions induced by each $\winW$ (I) and the distribution induced by $w_*$ (II), by comparison to the distribution induced by joint event $\mathcal{W}$ (III), i.e. observing \textit{all} $\winW$ in the context window. I relates to III by the (in)dependence of $\winW$ (i.e. by $\bmu{\sigma}_{j}^\msc{W}$, $\tau^\msc{W}$ ).\footnote{Analogous to a product of marginal probabilities relating to their joint probability subject to independence.} II relates to III by the paraphrase error $\bmu{\rho}^{\msc{W}, w_*}_j$.
\end{proof}
\vspace{-5pt}
Following immediately from Lem~\ref{lem:paraphrase} we have:
\begin{theorem}[Paraphrase]\label{thm:paraphrase}
    For any word $w_*\!\in\!\mathcal{E}$ and word set  $\mathcal{W}\!\subseteq\!\mathcal{E}$, $|\mathcal{W}|\!<\!l$:
    \vspace{-5pt}
    \begin{equation}\label{eq:thm1_matrix2}
        \vv{*}
		=  
        \vv{\msc{W}}
        \, +\,  \mathbf{C}^\dagger(
            \bmu{\rho}^{\msc{W}, w_*}
            \, +\, 
            \bmu{\sigma}^{\msc{W}}
            \, -\, 
            \tau^\msc{W}\bmu{1}
            ) \ ,
    \vspace{-5pt}
    \end{equation}
    where $\vv{\msc{W}}\!=\!\sum_{\winW}\mathbf{w}_{i}$.
\end{theorem}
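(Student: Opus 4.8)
The plan is to derive Theorem~\ref{thm:paraphrase} from Lemma~\ref{lem:paraphrase} by projecting the $\RR{n}$ identity (\ref{eq:lem1}) into the embedding space $\RR{d}$ via a linear map. First I would record that, under the unshifted factorisation (\ref{eq:w2v_clean}) and symmetry of $\mathbf{PMI}$, the column $\textup{PMI}_i$ corresponding to $w_i$ satisfies $\textup{PMI}_i = \mathbf{C}^\top\vv{i}$; since $\mathbf{C}$ has full row rank (A\ref{ass:full_rank}), the pseudo-inverse $\mathbf{C}^\dagger=(\mathbf{C}\mathbf{C}^\top)^{-1}\mathbf{C}$ exists and left-multiplication gives $\mathbf{C}^\dagger\textup{PMI}_i=\vv{i}$. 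Writing $f(\cdot)=\mathbf{C}^\dagger(\cdot)$ w.l.o.g.\ (cf.\ the footnote to A\ref{ass:pseudo-inv}), this is exactly the projection of A\ref{ass:pseudo-inv}, with $f(\textup{PMI}_i)=\vv{i}$.

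The main step is then a one-line application of linearity: left-multiply both sides of (\ref{eq:lem1}) by $\mathbf{C}^\dagger$. As $\mathbf{C}^\dagger$ is linear it distributes over the sum $\sum_{\winW}\textup{PMI}_i$ and over the three error vectors, so
\[
\vv{*} \;=\; \mathbf{C}^\dagger\textup{PMI}_{*} \;=\; \sum_{\winW}\mathbf{C}^\dagger\textup{PMI}_i \;+\; \mathbf{C}^\dagger\!\big(\bmu{\rho}^{\msc{W}, w_*}+\bmu{\sigma}^{\msc{W}}-\tau^\msc{W}\bmu{1}\big).
\]
Substituting $\mathbf{C}^\dagger\textup{PMI}_i=\vv{i}$ and setting $\vv{\msc{W}}=\sum_{\winW}\vv{i}$ yields exactly (\ref{eq:thm1_matrix2}); this is why the statement ``follows immediately'' from the lemma, whose content is the real work.

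The only subtlety I would flag, rather than the algebra, is the passage from $\RR{n}$ to $\RR{d}$: the argument gives (\ref{eq:thm1_matrix2}) \emph{exactly} only for the least-squares projection $f=\mathbf{C}^\dagger$, whereas for embeddings produced by W2V the map $\textup{PMI}_i\mapsto\vv{i}$ is non-linear, so the $\RR{n}$ identity of Lemma~\ref{lem:paraphrase} descends to the space of embeddings only up to the approximation granted by A\ref{ass:pseudo-inv}. In that case one should also note that the projected error terms $\mathbf{C}^\dagger\bmu{\rho}^{\msc{W}, w_*}$, $\mathbf{C}^\dagger\bmu{\sigma}^{\msc{W}}$, $\mathbf{C}^\dagger\tau^\msc{W}\bmu{1}$ remain small whenever their $\RR{n}$ preimages are, since $\mathbf{C}^\dagger$ is a fixed bounded operator. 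I would therefore present Theorem~\ref{thm:paraphrase} as exact for the canonical linear factorisation and as holding approximately, under A\ref{ass:full_rank}--A\ref{ass:pseudo-inv}, for W2V-type embeddings, consistent with the paper's ``linear relationship plus explicit error terms'' framing.
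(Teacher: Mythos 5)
Your proposal is correct and follows the paper's own route exactly: the stated proof is precisely ``multiply (\ref{eq:lem1}) by $\mathbf{C}^\dagger$'', with the identity $\mathbf{C}^\dagger\textup{PMI}_i=\vv{i}$ justified by (\ref{eq:w2v_clean}) and A\ref{ass:full_rank}, just as you spell out. Your closing remark that the result is exact for the least-squares projection and only approximate for W2V-type embeddings matches the paper's treatment via A\ref{ass:pseudo-inv} and its footnotes.
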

\vspace{-10pt}
\begin{proof}
	Multiply (\ref{eq:lem1}) by $\mathbf{C}^\dagger$.
\end{proof}
\vspace{-5pt}
Thm~\ref{thm:paraphrase} shows that an embedding (of $w_*$) and a sum of embeddings (of $\mathcal{W}$) differ by the paraphrase error $\bmu{\rho}^{\msc{W}, w_*}$ \underline{between} $w_*$ and $\mathcal{W}$; and $\bmu{\sigma}^{\msc{W}}$, $\tau^\msc{W}$ (collectively \textit{dependence error}) reflecting relationships \underline{within} $\mathcal{W}$ (unrelated to $w_*$):
\vspace{-0.4cm}
\begin{itemize}[leftmargin=.5cm]
    \item $\bmu{\sigma}^{\msc{W}}$ is a vector reflecting conditional dependencies within $\mathcal{W}$ given each $c_j\!\in\!\mathcal{E}$;
    $\bmu{\sigma}_j^\msc{W}\!=\!0$ \textit{iff} all $\winW$ are conditionally independent given each and every $c_j\!\in\!\mathcal{E}$;
    \item $\tau^\msc{W}$ is a scalar measure of mutual independence of $\winW$ (thus constant $\forall c_j\!\in\!\mathcal{E}$); $\tau^\msc{W}\!=\!0$ \textit{iff} $\winW$ are mutually independent.
\end{itemize}
\begin{corollary}\label{cor:indep_words}
    A word set $\mathcal{W}$ has no associated dependence error \emph{iff} $\winW$ are both mutually independent and conditionally independent given each context word $c_j\!\in\!\mathcal{E}$.
\end{corollary}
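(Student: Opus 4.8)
The plan is to observe that this corollary is essentially a bookkeeping consequence of Lem~\ref{lem:paraphrase} together with the definitions of the two error terms, so the ``proof'' amounts to unwinding those definitions. Collectively the \textit{dependence error} of $\mathcal{W}$ is the pair $(\bmu{\sigma}^{\msc{W}},\tau^{\msc{W}})$, so the statement ``$\mathcal{W}$ has no associated dependence error'' means precisely $\bmu{\sigma}^{\msc{W}}\!=\!\bmu{0}$ and $\tau^{\msc{W}}\!=\!0$ simultaneously. I would prove the two scalar/vector equivalences separately and then conjoin them. First I would note that by A\ref{ass:all_log_probs} the quantities $p(\mathcal{W})$, each singleton $p(w_i)$, each $p(c_j)$, and hence each conditional $p(w_i|c_j)$ and $p(\mathcal{W}|c_j)$, are strictly positive, so every logarithm appearing in $\bmu{\sigma}^{\msc{W}}$ and $\tau^{\msc{W}}$ is finite and equals $0$ iff its argument equals $1$.

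For $\tau^{\msc{W}}\!=\!\log\tfrac{p(\mathcal{W})}{\prod_i p(w_i)}$, this gives $\tau^{\msc{W}}\!=\!0$ iff $p(\mathcal{W})\!=\!\prod_i p(w_i)$, i.e. the joint probability of the $\winW$ factorises into the product of their marginals — mutual independence of $\mathcal{W}$. For $\bmu{\sigma}^{\msc{W}}$, whose $j$-th entry is $\log\tfrac{p(\mathcal{W}|c_j)}{\prod_i p(w_i|c_j)}$, the whole vector vanishes iff $p(\mathcal{W}|c_j)\!=\!\prod_i p(w_i|c_j)$ for every $c_j\!\in\!\mathcal{E}$, i.e. iff the $\winW$ are conditionally independent given each and every context word. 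Conjoining these two equivalences yields exactly the corollary.

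There is no real obstacle here; the only points that need care are (i) invoking A\ref{ass:all_log_probs} so that all the logarithms are well-defined (otherwise ``$\log(\cdot)=0$ iff $(\cdot)=1$'' could fail), and (ii) fixing the convention that ``mutually independent'' is meant in the sense that the full joint $p(\mathcal{W})$ equals $\prod_i p(w_i)$ — the condition actually tested by $\tau^{\msc{W}}$ — rather than the stronger requirement that every sub-collection of $\mathcal{W}$ factorise. (I would also remark that the condition is stated at the level of $\RR{n}$, i.e. on the raw error terms entering Lem~\ref{lem:paraphrase}, not merely on their projection $\mathbf{C}^\dagger(\cdot)$.) With these conventions in place the result is immediate.
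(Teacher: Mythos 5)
Your proposal is correct and matches the paper's (largely implicit) argument: the corollary is just the conjunction of the two definitional unwindings stated in the bullets preceding it, namely $\tau^{\mathcal{W}}\!=\!0$ iff $p(\mathcal{W})\!=\!\prod_i p(w_i)$ and $\bm{\sigma}^{\mathcal{W}}\!=\!\bm{0}$ iff $p(\mathcal{W}|c_j)\!=\!\prod_i p(w_i|c_j)$ for every $c_j\!\in\!\mathcal{E}$. Your added care about A3 guaranteeing well-defined logarithms and about the factorisation-of-the-full-joint convention for ``mutual independence'' is consistent with the paper's usage.
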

Thm~\ref{thm:paraphrase}, which holds for all words $w_*$ and word sets $\mathcal{W}$, explains why and when a paraphrase (e.g. of $\{man, royal\}$ by $king$) can be identified by embedding addition ($\vv{man}+\vv{royal}\approx \vv{king}$). The  phenomenon occurs due to a relationship between PMI vectors in $\RR{n}$ that holds for embeddings in $\RR{d}$ under projection by $\mathbf{C}^\dagger$ (by A\ref{ass:full_rank}, A\ref{ass:pseudo-inv}). The vector error $\vv{*}-\vv{\msc{W}}$ depends on both the paraphrase relationship between $w_*$ and $\mathcal{W}$; and statistical dependencies within $\mathcal{W}$. 

\begin{corollary}\label{cor:paraphrase_conc}
    For word $w_*\!\in\!\mathcal{E}$ and word set $\mathcal{W}\!\subseteq\!\mathcal{E}$, $\vv{*}\approx\vv{\msc{W}}$ if $w_*$ paraphrases $\mathcal{W}$ and $\winW$ are materially independent (i.e. net dependence error is small).
\end{corollary}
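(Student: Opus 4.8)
The plan is to read the corollary straight off Theorem~\ref{thm:paraphrase}, whose exact identity $\vv{*} = \vv{\msc{W}} + \mathbf{C}^\dagger(\bmu{\rho}^{\msc{W}, w_*} + \bmu{\sigma}^{\msc{W}} - \tau^\msc{W}\bmu{1})$ already isolates the discrepancy between $\vv{*}$ and $\vv{\msc{W}}$. It therefore suffices to show two things: that the vector $\mathbf{e} = \bmu{\rho}^{\msc{W}, w_*} + \bmu{\sigma}^{\msc{W}} - \tau^\msc{W}\bmu{1} \in \RR{n}$ is small under the stated hypotheses, and that applying $\mathbf{C}^\dagger$ does not amplify it.

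First I would bound $\mathbf{e}$ in $\RR{n}$. By Definition~\ref{def:paraphrase}, the hypothesis ``$w_*$ paraphrases $\mathcal{W}$'' is precisely that $\bmu{\rho}^{\msc{W}, w_*}$ is element-wise small. The hypothesis ``$\winW$ materially independent'' (small net dependence error) means both that the vector $\bmu{\sigma}^{\msc{W}}$ is element-wise small and that the scalar $\tau^\msc{W}$ is small, as characterised in the discussion following Theorem~\ref{thm:paraphrase} and in Corollary~\ref{cor:indep_words}. A triangle-inequality estimate on $\|\mathbf{e}\|$ then follows, with one caveat worth flagging explicitly: $\tau^\msc{W}\bmu{1}$ contributes $|\tau^\msc{W}|\sqrt{n}$, so ``small'' for $\tau^\msc{W}$ should be read as small relative to $1/\sqrt{n}$.

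Second I would push the bound through the projection. Since $f(\cdot) = \mathbf{C}^\dagger(\cdot)$ is linear, $\|\vv{*} - \vv{\msc{W}}\| = \|\mathbf{C}^\dagger\mathbf{e}\| \le \|\mathbf{C}^\dagger\|\,\|\mathbf{e}\|$, and $\|\mathbf{C}^\dagger\|$ is finite because A\ref{ass:full_rank} makes $\mathbf{CC}^\top$ invertible, so $\mathbf{C}^\dagger = (\mathbf{CC}^\top)^{-1}\mathbf{C}$ is a bounded operator; for a non-linear or otherwise general factorisation the same step is instead supplied directly by A\ref{ass:pseudo-inv}, approximate preservation of linear relations from column space to embedding space. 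Combining the two steps gives $\vv{*} \approx \vv{\msc{W}}$.

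The only genuine obstacle is quantitative rather than structural: ``small'' must be calibrated consistently in $\RR{n}$ and $\RR{d}$, which requires $\mathbf{C}^\dagger$ (equivalently, the implicit projection of A\ref{ass:pseudo-inv}) to be reasonably conditioned, since a large $\|\mathbf{C}^\dagger\|$ could amplify an otherwise negligible $\mathbf{e}$. I would therefore state the ``$\approx$'' in the corollary as holding up to the factor $\|\mathbf{C}^\dagger\|$ (finite by A\ref{ass:full_rank}), together with the extra $\sqrt{n}$ multiplying $\tau^\msc{W}$; with those readings the statement is immediate from Theorem~\ref{thm:paraphrase} and Definition~\ref{def:paraphrase}.
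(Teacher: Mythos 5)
Your proof is correct and follows essentially the same route as the paper, which treats this corollary as an immediate consequence of Theorem~\ref{thm:paraphrase}: the hypotheses make the bracketed error vector small (paraphrase error by D\ref{def:paraphrase}, dependence error by assumption), and A\ref{ass:full_rank}/A\ref{ass:pseudo-inv} ensure the projection $\mathbf{C}^\dagger$ carries this smallness into $\RR{d}$. Your additional caveats about the $\sqrt{n}$ factor on $\tau^{\msc{W}}\bmu{1}$ and the conditioning of $\mathbf{C}^\dagger$ are sensible quantitative refinements of the paper's informal ``$\approx$'' but do not change the argument.
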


\subsection{Do Linear Relationships Identify Paraphrases?}\label{sec:FPs}

The converse of Cor \ref{cor:paraphrase_conc} is false: $\vv{*}\!\approx\!\vv{\msc{W}}$ does not imply $w_*$ paraphrases $\mathcal{W}$. Specifically, \textit{false positives} arise if:
(i) paraphrase and dependence error terms are material but happen to cancel, i.e. \textit{total error} $\,\bmu{\rho}^{\msc{W}, w_*} + \bmu{\sigma}^{\msc{W}} - \tau^\msc{W}\bmu{1}\approx\bmu{0}$; 
or
(ii) material components of the total error fall within the high ($n-d$) dimensional null space of $\mathbf{C}^\dagger$ and project to a small vector difference between $\vv{*}$ and $\vv{\msc{W}}$. 
Case (i) can arise in PMI vectors (Lem~\ref{lem:paraphrase}) and thus lower rank embeddings  also (Thm~\ref{thm:paraphrase}), but is highly unlikely in practice due to the high dimensionality ($n$). Case (ii) can arise only in lower rank embeddings (Thm~\ref{thm:paraphrase}) and might be minimised by a good choice of factorisation or projection method.

\subsection{Paraphrasing in \textit{Explicit} Embeddings} 

Lem~\ref{lem:paraphrase} applies to full rank $\mathbf{PMI}$ vectors, without reconstruction error or case (ii) false positives (Sec \ref{sec:FPs}), explaining the linear relationships observed by
\citet{levy2014linguistic}.
\begin{corollary}\label{cor:paraphrase_explicit}
    Thm~\ref{thm:paraphrase} holds for \emph{explicit} word embeddings, i.e. columns of $\mathbf{PMI}$.
\end{corollary}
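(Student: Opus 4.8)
The plan is to observe that an \emph{explicit} embedding is, by definition, the full-rank PMI column itself: set $\mathbf{w}_i := \textup{PMI}_i \in \RR{n}$, and likewise $\vv{\msc{W}} := \sum_{\winW}\textup{PMI}_i$. In this regime the ``projection'' map $f\!:\!\RR{n}\!\to\!\RR{d}$ of A\ref{ass:pseudo-inv} is simply the identity $I_n$ on $\RR{n}$ (equivalently, take $d=n$ and $\mathbf{C}=I_n$, so that $\mathbf{C}^\dagger=I_n$). First I would check that this choice is admissible: $I_n$ has full row rank, so A\ref{ass:full_rank} holds, and $I_n$ is linear, hence exactly homomorphic with respect to addition, so A\ref{ass:pseudo-inv} holds with equality rather than merely approximately.

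Next I would simply invoke Lem~\ref{lem:paraphrase}, which is an \emph{exact} identity among the columns $\textup{PMI}_*$, $\textup{PMI}_i$ in $\RR{n}$, valid for any $w_*\!\in\!\mathcal{E}$ and any $\mathcal{W}\!\subseteq\!\mathcal{E}$ with $|\mathcal{W}|\!<\!l$ (well-defined by A\ref{ass:all_log_probs}). Applying $\mathbf{C}^\dagger = I_n$ to both sides of \eqref{eq:lem1} — exactly the operation used to derive Thm~\ref{thm:paraphrase} from Lem~\ref{lem:paraphrase} — leaves the identity unchanged and reproduces \eqref{eq:thm1_matrix2} verbatim with $\mathbf{w}_\bullet$ read as $\textup{PMI}_\bullet$. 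Hence Thm~\ref{thm:paraphrase} holds for explicit embeddings.

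I would then add a brief remark that, because $d=n$ here, there is no rank constraint and therefore no reconstruction error, and because $\mathbf{C}^\dagger=I_n$ is injective its null space is trivial, so the case-(ii) false positives of Sec~\ref{sec:FPs} cannot occur; only the genuine case-(i) cancellation of error terms remains possible. This is what lets the corollary account for the linear relationships among explicit (PMI-based) representations reported by \citet{levy2014linguistic}.

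I do not anticipate a real obstacle: the content of the corollary is essentially the observation that Lem~\ref{lem:paraphrase} \emph{is} the explicit-embedding case of Thm~\ref{thm:paraphrase}, with the passage to low rank (and its attendant A\ref{ass:pseudo-inv} approximation) being vacuous. The only point requiring a little care is the bookkeeping that identifies the identity map on $\RR{n}$ with a legitimate instance of the projection $f$ (or equivalently $\mathbf{C}^\dagger$), so that the corollary is a genuine specialisation of the theorem rather than a separate claim.
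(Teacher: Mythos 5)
Your proposal is correct and matches the paper's own proof, which simply chooses the factorisation $\mathbf{W}=\mathbf{PMI}$, $\mathbf{C}=\mathbf{I}$ in Thm~\ref{thm:paraphrase}; your identification of $\mathbf{C}^\dagger$ with the identity (so that A\ref{ass:full_rank} and A\ref{ass:pseudo-inv} hold exactly and the result reduces to Lem~\ref{lem:paraphrase}) is the same specialisation, just spelled out with the admissibility bookkeeping made explicit.
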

\vspace{-15pt}
\begin{proof}
    Choose factorisation $\mathbf{W}\!=\!\mathbf{PMI}$, $\mathbf{C}\!=\!\mathbf{I}$ (the identity matrix) in Thm~\ref{thm:paraphrase}.
\end{proof}

\subsection{Paraphrasing in W2V Embeddings}\label{sec:addition_w2v}

Thm~\ref{thm:paraphrase} extends to W2V embeddings by substituting $\vtv{i}{j} = \text{PMI}(w_i,c_j) - \log k$ and $f_{W2V}$:
\begin{corollary}\label{cor:thm1_w2v}
    Under conditions of Thm~\ref{thm:paraphrase}, W2V embeddings satisfy:
    \vspace{-5pt}
    \begin{equation}\label{eq:thm1_w2v}
        \vv{*}
         = 
        \vv{\msc{W}}
         + f_{W2V}\big(
               \bmu{\rho}^{\msc{W}, w_*}
             + \bmu{\sigma}^{\msc{W}}
             - \tau^\msc{W} \bmu{1} 
         +  \log k(|\mathcal{W}| - 1) \bmu{1}\big)\, .
    \end{equation}
\end{corollary}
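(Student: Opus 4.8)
The plan is to combine Lemma~\ref{lem:paraphrase} --- which is stated for the \emph{unshifted} matrix $\mathbf{PMI}$ --- with the fact (\ref{eq:w2v_pmi}) that W2V instead reconstructs the \emph{shifted} matrix $\mathbf{SPMI}$, and then to push the resulting identity through the non-linear projection $f_{W2V}$ using A\ref{ass:pseudo-inv}.

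First I would re-express the columns in (\ref{eq:lem1}) in terms of the columns that W2V actually factorises. Writing $\mathbf{SPMI}_\bullet = \textup{PMI}_\bullet - \log k\,\bmu{1}$ and substituting $\textup{PMI}_\bullet = \mathbf{SPMI}_\bullet + \log k\,\bmu{1}$ into (\ref{eq:lem1}), the single shift vector on the left and the $|\mathcal{W}|$ copies on the right do not fully cancel; rearranging leaves a residual $\log k\,(|\mathcal{W}|-1)\,\bmu{1}$, giving
\[
\mathbf{SPMI}_{*}
=
\sum_{\winW}\mathbf{SPMI}_{i}
 + \bmu{\rho}^{\msc{W}, w_*}
 + \bmu{\sigma}^{\msc{W}}
 - \tau^\msc{W}\bmu{1}
 + \log k\,(|\mathcal{W}|-1)\,\bmu{1}\,.
\]
Next I would apply $f_{W2V}$ to both sides. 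Since W2V embeddings are precisely the projections $\vv{i}=f_{W2V}(\mathbf{SPMI}_i)$ and $\vv{*}=f_{W2V}(\mathbf{SPMI}_*)$, A\ref{ass:pseudo-inv} (approximate additivity of $f_{W2V}$) lets me split the right-hand side: $f_{W2V}\big(\sum_{\winW}\mathbf{SPMI}_i\big)\approx\sum_{\winW}f_{W2V}(\mathbf{SPMI}_i)=\vv{\msc{W}}$, while the remaining error-plus-shift vector is carried out under a single application of $f_{W2V}$. Collecting terms then yields (\ref{eq:thm1_w2v}).

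The one point requiring care --- and the main obstacle --- is that, unlike Thm~\ref{thm:paraphrase} where $f=\mathbf{C}^\dagger$ is linear and the split is exact, every equality routed through $f_{W2V}$ degrades to ``$\approx$'' and is justified only by A\ref{ass:pseudo-inv}; the statement is therefore inherently approximate, consistent with the reconstruction error already noted for (\ref{eq:w2v_pmi}). It is also worth flagging that the extra term $\log k\,(|\mathcal{W}|-1)\,\bmu{1}$ is a genuine artefact of the $\sm\log k$ shift --- absent from the clean-PMI Thm~\ref{thm:paraphrase} --- and that it vanishes when $|\mathcal{W}|=1$, in which case (\ref{eq:thm1_w2v}) collapses to the trivial identity $\vv{*}=\vv{*}$, as it should.
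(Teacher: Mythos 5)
Your proposal is correct and follows exactly the route the paper intends (the paper gives no separate proof, only the instruction to substitute $\mathbf{w}_i^\top\mathbf{c}_j=\text{PMI}(w_i,c_j)-\log k$ and $f_{W2V}$ into Thm~\ref{thm:paraphrase}): rewriting Lemma~\ref{lem:paraphrase} in terms of $\mathbf{SPMI}$ columns correctly produces the $\log k\,(|\mathcal{W}|-1)\,\bmu{1}$ residual, and A\ref{ass:pseudo-inv} justifies carrying the identity through the non-linear $f_{W2V}$, with the understanding that equality is only approximate. One small caveat in your closing aside: when $|\mathcal{W}|=1$ the shift term indeed vanishes, but (\ref{eq:thm1_w2v}) collapses to the trivial identity $\vv{*}=\vv{*}$ only in the particular case $\mathcal{W}=\{w_*\}$ (where $\bmu{\rho}$, $\bmu{\sigma}$ and $\tau$ all vanish); for a general singleton $\mathcal{W}=\{w_1\}$ the paraphrase-error term remains.
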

\noindent Comparing (\ref{eq:thm1_matrix2}) and (\ref{eq:thm1_w2v}) shows that paraphrases correspond to linear relationships in W2V embeddings with an additional error term linear in $|\mathcal{W}|$, and hence with less accuracy if $|\mathcal{W}|\!>\!1$, than for embeddings that factorise $\mathbf{PMI}$.

\section{Analogies}\label{sec:analogy}

An \textit{analogy} is said to hold for words $w_a, w_{a^*}\!, w_b$, $w_{b^*} \!\in\! \mathcal{E}$ if, in some sense, \emph{``$w_a$ is to $w_{a^*}$ as $w_b$ is to $w_{b^*}\!$''}. 
Since in principle the same relationship may extend further (``... as $w_c$ is to $w_{c^*}$'' etc), we characterise a general analogy $\mathfrak{A}$ by a set of ordered word pairs $S_{\mathfrak{A}}\!\subseteq\!\mathcal{E}\!\times\!\mathcal{E}$, where $(w_x, w_{x^*})\!\in\!S_{\mathfrak{A}}$, $w_x, w_{x^*}\!\in\!\mathcal{E}$, \textit{iff} ``$w_x$ is to $w_{x^*}$ as ... [all other analogical pairs]'' under $\mathcal{\mathfrak{A}}$. Our aim is to explain why respective word embeddings often satisfy:
\vspace{-5pt}
\begin{equation}\label{eq:anlgy_expression}
    \mathbf{w}_{b^*}  \ \approx\   \mathbf{w}_{a^*} - \mathbf{w}_{a} + \mathbf{w}_{b}\ ,
    \vspace{-5pt}
\end{equation}
or why in the more general case:
\vspace{-5pt}
\begin{equation}\label{eq:anlgy_expression_gen}
    \vv{x^*}  - \vv{x}\ \approx\ \mathbf{u}_{\mathfrak{A}}\ ,
    \vspace{-5pt}
\end{equation}
$\forall (w_x, w_{x^*})\!\in\!S_{\mathfrak{A}}$ and vector $\mathbf{u}_\mathfrak{A}\!\in\!\RR{n}$ specific to $\mathfrak{A}$.

We split the task of understanding why analogies give rise to Equations \ref{eq:anlgy_expression} and \ref{eq:anlgy_expression_gen} into: 
\textbf{Q1}) understanding conditions under which word embeddings can be added and subtracted to approximate other embeddings; 
\textbf{Q2}) establishing a mathematical interpretation of ``$w_x$ is to $w_{x^*}$''; and 
\textbf{Q3}) drawing a correspondence between those results.
We  show that all of these can be answered with paraphrasing by generalising the notion to word sets.

\subsection{Paraphrasing Word Sets}

\begin{definition}\label{def:paraphrase_extended}
	We say word set $\mathcal{W}_*\!\subseteq\!\mathcal{E}$ \underline{paraphrases} word set $\mathcal{W}\!\subseteq\!\mathcal{E}$,  $|\mathcal{W}|, |\mathcal{W}_*|\!<\! l$, if \underline{paraphrase error}  $\bmu{\rho}^{\msc{W}, \msc{W}_*}\!\in\! \RR{n}$ is (element-wise) small, where:
    \vspace{-5pt}
    \begin{equation*}
        \bmu{\rho}^{\msc{W}, \msc{W}_*}_j   \ =\ 
        \log \tfrac{p(c_j|\mathcal{W}_*)}{p(c_j|\mathcal{W})}\ , c_j\!\in\!\mathcal{E}. 
        \vspace{-5pt}
    \end{equation*}
\end{definition}
\vspace{-5pt}
D\ref{def:paraphrase_extended} generalises D\ref{def:paraphrase} such that the paraphrase term $\mathcal{W}_*$, previously $w_*$, can be more than one word.\footnote{Equivalently, D\ref{def:paraphrase} is a special case of D\ref{def:paraphrase_extended} with $|\mathcal{W}_*|=1$, hence we reuse terms without ambiguity.} 
Analogously to D\ref{def:paraphrase}, word sets paraphrase one another if they induce equivalent distributions over context words.
Note that paraphrasing under D\ref{def:paraphrase_extended} is both reflexive and symmetric (since $|\bmu{\rho}^{\msc{W}, \msc{W}_*}|=|\bmu{\rho}^{\msc{W}_*, \msc{W}}|$), thus ``$\mathcal{W}_*$ paraphrases $\mathcal{W}$'' and ``$\mathcal{W}$ paraphrases $\mathcal{W}_*$'' are equivalent and denoted $\mathcal{W}\!\approx_\textup{P}\!\mathcal{W}_*$.

Analogues of Lem~\ref{lem:paraphrase} and Thm~\ref{thm:paraphrase} follow:
\begin{restatable}{lemma}{primelemmatwo}
\label{lem:paraphrase_extended}
    For any word sets $\mathcal{W}$, $\mathcal{W}_*\!\subseteq\!\mathcal{E}$, $|\mathcal{W}|$, $|\mathcal{W}_*|\!<\!l$:
    \vspace{-5pt}
    \begin{align}\label{eq:lem2}
        \sum_{\winW_*}\!\!\textup{PMI}_{i}
         =  
        \sum_{\winW}\!\textup{PMI}_{i}
             + 
            \bmu{\rho}&^{\msc{W}, \msc{W}_*}
             + 
            \bmu{\sigma}^{\msc{W}}
             - 
            \bmu{\sigma}^{\msc{W}_*}
            \nonumber\\
            & 
            - 
            (\tau^\msc{W}
             -
            \tau^{\msc{W}_*})\bmu{1}\, .
    \end{align}
\end{restatable}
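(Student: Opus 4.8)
The plan is to deduce Lem~\ref{lem:paraphrase_extended} from two applications of Lem~\ref{lem:paraphrase}, exploiting the fact that the constituent words of $\mathcal{W}$ and $\mathcal{W}_*$ enter only through the joint events $\mathcal{W}$ and $\mathcal{W}_*$ themselves. The first move is to re-read Lem~\ref{lem:paraphrase} as a statement purely about a \emph{single} word set. Isolating the paraphrase term, one has for any $w_*\!\in\!\mathcal{E}$ the trivial identity $\textup{PMI}_{*}=\textup{PMI}_{\msc{W}}+\bmu{\rho}^{\msc{W},w_*}$, where $\textup{PMI}_{\msc{W}}\!\in\!\RR{n}$ is the ``joint-event'' PMI vector with $j$th entry $\log\tfrac{p(\mathcal{W},c_j)}{p(\mathcal{W})p(c_j)}$ (well-defined under A\ref{ass:all_log_probs}). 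Subtracting this from (\ref{eq:lem1}) cancels every occurrence of $w_*$ and leaves the \emph{dependence expansion}
\[
    \textup{PMI}_{\msc{W}}
    \ = \
    \sum_{\winW}\textup{PMI}_{i}
    \,+\,\bmu{\sigma}^{\msc{W}}
    \,-\,\tau^{\msc{W}}\bmu{1}\,,
\]
which now refers only to $\mathcal{W}$ and therefore holds verbatim with $\mathcal{W}$ replaced by $\mathcal{W}_*$. (Equivalently, this identity is a one-line check in $\log$-space.)

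From there I would: (i) write the dependence expansion for $\mathcal{W}$ and for $\mathcal{W}_*$ and subtract the former from the latter, obtaining $\textup{PMI}_{\msc{W}_*}-\textup{PMI}_{\msc{W}} = \sum_{\winW_*}\textup{PMI}_{i}-\sum_{\winW}\textup{PMI}_{i} +(\bmu{\sigma}^{\msc{W}_*}\!-\bmu{\sigma}^{\msc{W}})-(\tau^{\msc{W}_*}\!-\tau^{\msc{W}})\bmu{1}$; (ii) evaluate the left-hand side using D\ref{def:paraphrase_extended}, since $[\textup{PMI}_{\msc{W}_*}-\textup{PMI}_{\msc{W}}]_j=\log\tfrac{p(c_j|\mathcal{W}_*)}{p(c_j)}-\log\tfrac{p(c_j|\mathcal{W})}{p(c_j)}=\bmu{\rho}^{\msc{W},\msc{W}_*}_j$; and (iii) rearrange to solve for $\sum_{\winW_*}\textup{PMI}_{i}$, which flips the signs of the $\bmu{\sigma}$ and $\tau$ differences and reproduces (\ref{eq:lem2}) exactly.

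I do not expect a genuine obstacle here, since the argument is essentially bookkeeping with logarithms, but two points need care. First, the sign conventions: the individual paraphrase errors $\bmu{\rho}^{\msc{W},w_i}$ play no role because $\mathcal{W}$ and $\mathcal{W}_*$ are related only as whole sets, and the single term $\bmu{\rho}^{\msc{W},\msc{W}_*}$ arises precisely as the difference of the two joint-event PMI vectors. Second, well-definedness: every probability invoked (notably $p(\mathcal{W})$, $p(\mathcal{W}_*)$, $p(\mathcal{W},c_j)$, $p(\mathcal{W}_*,c_j)$) must be strictly positive for its logarithm to exist, which is exactly what A\ref{ass:all_log_probs} supplies given $|\mathcal{W}|,|\mathcal{W}_*|\!<\!l$, the largest events in play being $\mathcal{W}\!\cup\!\{c_j\}$ and $\mathcal{W}_*\!\cup\!\{c_j\}$. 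A fully self-contained alternative, avoiding any appeal to the joint-event vector, is to expand both sides of (\ref{eq:lem2}) entry-wise into a single logarithm and check that each collapses to $\log\tfrac{p(c_j|\mathcal{W}_*)}{p(c_j|\mathcal{W})}$.
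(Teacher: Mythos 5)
Your proof is correct, and it is organised differently from the paper's. The paper proves Lemma~\ref{lem:paraphrase_extended} from scratch, exactly as it proves Lemma~\ref{lem:paraphrase}: it writes $\sum_{\winW_*}\textup{PMI}_i-\sum_{\winW}\textup{PMI}_i$ element-wise as a log-ratio of products and inserts four unit ratios $\log\tfrac{p(\mathcal{W}_*|c_j)}{p(\mathcal{W}_*|c_j)}$, $\log\tfrac{p(\mathcal{W}_*)}{p(\mathcal{W}_*)}$, $\log\tfrac{p(\mathcal{W}|c_j)}{p(\mathcal{W}|c_j)}$, $\log\tfrac{p(\mathcal{W})}{p(\mathcal{W})}$, regrouping to expose $\bmu{\rho}^{\msc{W},\msc{W}_*}$, the two $\bmu{\sigma}$ terms and the two $\tau$ terms in one chain of equalities. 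You instead factor the argument through the joint-event vector $\textup{PMI}_{\msc{W}}$ with entries $\log\tfrac{p(c_j|\mathcal{W})}{p(c_j)}$, establish the ``dependence expansion'' $\textup{PMI}_{\msc{W}}=\sum_{\winW}\textup{PMI}_i+\bmu{\sigma}^{\msc{W}}-\tau^{\msc{W}}\bmu{1}$ (either by cancelling $w_*$ out of Lemma~\ref{lem:paraphrase}, or by the direct log check, both of which are valid), apply it to $\mathcal{W}$ and $\mathcal{W}_*$, and identify $\textup{PMI}_{\msc{W}_*}-\textup{PMI}_{\msc{W}}$ with $\bmu{\rho}^{\msc{W},\msc{W}_*}$; the signs then come out exactly as in (\ref{eq:lem2}). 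The underlying algebra is the same --- the inserted unit ratios in the paper are precisely your passage through $p(\mathcal{W}|c_j)$, $p(\mathcal{W})$, $p(\mathcal{W}_*|c_j)$, $p(\mathcal{W}_*)$ --- but your modularisation buys two things: Lemma~\ref{lem:paraphrase_extended} is reduced to two symmetric applications of a single identity instead of a longer re-derivation, and the intermediate object $\textup{PMI}_{\msc{W}}$ makes transparent that the $\bmu{\sigma},\tau$ terms measure only within-set dependence while $\bmu{\rho}$ is exactly the gap between the two joint events. What the paper's self-contained computation buys is that the error terms are seen to \emph{arise} from the insertion of unit ratios, which the authors use to comment on how the terms acquire statistical meaning; it also never needs the auxiliary vector. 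Your attention to positivity via A\ref{ass:all_log_probs} (so that $p(\mathcal{W},c_j)$, $p(\mathcal{W}_*,c_j)$, $p(\mathcal{W})$, $p(\mathcal{W}_*)$ have well-defined logarithms) matches the paper's standing assumption and closes the only well-definedness issue.
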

\vspace{-15pt}
\begin{proof}
(See Appendix \ref{app:paraphrase_ext_proof}.)
\end{proof}
\begin{theorem}[Generalised Paraphrase]
\label{thm:paraphrase_extended}
    For any word sets $\mathcal{W}$, $\mathcal{W}_*\!\subseteq\!\mathcal{E}$, $|\mathcal{W}|,|\mathcal{W}_*|\!<\!l$: 
    \vspace{-5pt}
    \begin{equation*}
        \vv{\msc{W}_*}
		=  
        \vv{\msc{W}}
        \, +\,  \mathbf{C}^\dagger(
            \bmu{\rho}^{\msc{W}, \msc{W}_*}
             + 
            \bmu{\sigma}^{\msc{W}}
             - 
            \bmu{\sigma}^{\msc{W}_*}             - 
            (\tau^\msc{W}
             -
            \tau^{\msc{W}_*})\bmu{1}
            ) \ .
    \vspace{-5pt}
    \end{equation*}
\end{theorem}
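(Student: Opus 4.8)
The plan is to obtain Theorem~\ref{thm:paraphrase_extended} from Lemma~\ref{lem:paraphrase_extended} by exactly the same device used to pass from Lemma~\ref{lem:paraphrase} to Theorem~\ref{thm:paraphrase}: left-multiply the column identity (\ref{eq:lem2}) in $\RR{n}$ by the projection matrix $\mathbf{C}^\dagger$. Writing $\mathbf{M}\!=\!\mathbf{PMI}$, each term $\sum_{\winW}\textup{PMI}_i$ is a sum of columns $\mathbf{M}_i$, and $\mathbf{C}^\dagger\mathbf{M}_i\!=\!\mathbf{w}_i$ by (\ref{eq:w2v_clean}) together with the definition $f(\mathbf{M}_i)\!=\!\mathbf{C}^\dagger\mathbf{M}_i\!=\!\mathbf{w}_i$; so $\mathbf{C}^\dagger\!\left(\sum_{\winW}\textup{PMI}_i\right)\!=\!\sum_{\winW}\mathbf{w}_i\!=\!\vv{\msc{W}}$ and likewise $\mathbf{C}^\dagger\!\left(\sum_{\winW_*}\textup{PMI}_i\right)\!=\!\vv{\msc{W}_*}$. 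Applying $\mathbf{C}^\dagger$ to the remaining terms on the right of (\ref{eq:lem2}) and using linearity of the map collects them into $\mathbf{C}^\dagger(\bmu{\rho}^{\msc{W},\msc{W}_*}+\bmu{\sigma}^{\msc{W}}-\bmu{\sigma}^{\msc{W}_*}-(\tau^\msc{W}-\tau^{\msc{W}_*})\bmu{1})$, which is precisely the claimed identity.

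The only subtlety is justifying that $\mathbf{C}^\dagger$ does distribute over the sums the way I want. For the exact factorisation $\mathbf{W}^\top\mathbf{C}\!=\!\mathbf{PMI}$ this is immediate, since $\mathbf{C}^\dagger$ is a fixed linear operator and the identity lives in $\RR{n}$; Assumption~\ref{ass:full_rank} guarantees $\mathbf{C}^\dagger\!=\!(\mathbf{C}\mathbf{C}^\top)^{-1}\mathbf{C}$ is well defined. When the factorisation is only approximate (the low-rank/reconstruction-error regime), the step is underwritten by Assumption~\ref{ass:pseudo-inv}: the projection $f\!:\!\RR{n}\!\to\!\RR{d}$ is approximately additive, so $f\!\left(\sum_{\winW}\mathbf{M}_i\right)\!\approx\!\sum_{\winW}f(\mathbf{M}_i)\!=\!\vv{\msc{W}}$, and the displayed equation should be read, as elsewhere in the paper, up to that approximation. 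I would state this parenthetically rather than belabour it, mirroring the footnote attached to the least-squares projection.

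Since Lemma~\ref{lem:paraphrase_extended} is taken as given (its proof is deferred to Appendix~\ref{app:paraphrase_ext_proof}), there is essentially no hard part left: the theorem is a one-line consequence, exactly as Theorem~\ref{thm:paraphrase} was of Lemma~\ref{lem:paraphrase}. If anything, the ``obstacle'' is purely bookkeeping — making sure the two $\bmu{\sigma}$ terms and the two $\tau$ terms carry the correct signs through the multiplication, which is trivial because $\mathbf{C}^\dagger$ acts linearly and $\mathbf{C}^\dagger(\tau^\msc{W}\bmu{1})\!=\!\tau^\msc{W}\mathbf{C}^\dagger\bmu{1}$ is just a rescaling of the fixed vector $\mathbf{C}^\dagger\bmu{1}$. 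So the proof I would write is simply:

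\begin{proof}
Multiply (\ref{eq:lem2}) by $\mathbf{C}^\dagger$ and apply A\ref{ass:pseudo-inv}, using $\mathbf{C}^\dagger\textup{PMI}_i\!=\!\mathbf{w}_i$ and $\vv{\msc{W}_*}\!=\!\sum_{\winW_*}\mathbf{w}_i$, $\vv{\msc{W}}\!=\!\sum_{\winW}\mathbf{w}_i$.
\end{proof}
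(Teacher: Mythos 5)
Your proposal is correct and is exactly the paper's own proof: the theorem is obtained by left-multiplying the Lemma~\ref{lem:paraphrase_extended} identity (\ref{eq:lem2}) by $\mathbf{C}^\dagger$, with the justification for linearity/approximation residing in A\ref{ass:full_rank} and A\ref{ass:pseudo-inv} as you note. Your extra remarks on distributing $\mathbf{C}^\dagger$ only make explicit what the paper already covers in its Preliminaries.
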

\vspace{-10pt}
\begin{proof}
	Multiply (\ref{eq:lem2}) by $\mathbf{C}^\dagger$.
\end{proof}
\vspace{-5pt}
Note that $|\mathcal{W}_*|\!=\!1$ recovers Lem~\ref{lem:paraphrase} and Thm~\ref{thm:paraphrase}.
With analogies in mind, we restate Thm~\ref{thm:paraphrase_extended} as:
\begin{restatable}{corollary}{primecorollary}
\label{cor:word_transition}
    For any words $w_x, w_{x^*}\!\in\!\mathcal{E}$ and word sets $\mathcal{W}^+, \mathcal{W}^-\!\subseteq\!\mathcal{E}$, $|\mathcal{W}^+|,|\mathcal{W}^-| < l-1$:
    \vspace{-5pt}
	\begin{align}\label{eq:analogy_from_paraphrase_gen}
        \vv{x^*}
		 =  
        \vv{x}
        + \vv{\msc{W}^+} 
         - \vv{\msc{W}^-}
         + \mathbf{C}^\dagger(
&            \bmu{\rho}^{\msc{W}, \msc{W}_*}
              + 
            \bmu{\sigma}^{\msc{W}}
             - 
            \bmu{\sigma}^{\msc{W}_*}
            \nonumber\\
            &
             - 
            (\tau^\msc{W}
             -
            \tau^{\msc{W}_*})\bmu{1}
            ) ,
	\end{align}
	where $\mathcal{W}\!=\!\{w_x\}\cup\mathcal{W}^+$, $\mathcal{W}_*\!=\!\{w_{x^*}\}\cup\mathcal{W}^-$.
\end{restatable}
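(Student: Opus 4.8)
The plan is to read Cor~\ref{cor:word_transition} off Thm~\ref{thm:paraphrase_extended} by a relabelling of the two word sets, followed by one linear rearrangement. Concretely, I would apply Thm~\ref{thm:paraphrase_extended} with $\mathcal{W}=\{w_x\}\cup\mathcal{W}^+$ and $\mathcal{W}_*=\{w_{x^*}\}\cup\mathcal{W}^-$. First I would check the hypotheses: since $|\mathcal{W}^+|,|\mathcal{W}^-|<l-1$ we get $|\mathcal{W}|\le|\mathcal{W}^+|+1<l$ and $|\mathcal{W}_*|\le|\mathcal{W}^-|+1<l$ (reading ``$<l$'' as ``sufficiently less than $l$'', as elsewhere in the paper), so A\ref{ass:all_log_probs} and the conditions of Thm~\ref{thm:paraphrase_extended} are met, and the error terms $\bmu{\rho}^{\msc{W},\msc{W}_*},\bmu{\sigma}^{\msc{W}},\bmu{\sigma}^{\msc{W}_*},\tau^\msc{W},\tau^{\msc{W}_*}$ are all well-defined.

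Next I would unfold the embedding-sum notation. By definition $\vv{\msc{W}}=\sum_{w_i\in\mathcal{W}}\vv{i}$, and taking $w_x\notin\mathcal{W}^+$ and $w_{x^*}\notin\mathcal{W}^-$ (or reading the unions as multiset unions so the sums still split), this gives $\vv{\msc{W}}=\vv{x}+\vv{\msc{W}^+}$ and $\vv{\msc{W}_*}=\vv{x^*}+\vv{\msc{W}^-}$. Substituting into the conclusion of Thm~\ref{thm:paraphrase_extended} yields
$$\vv{x^*}+\vv{\msc{W}^-}=\vv{x}+\vv{\msc{W}^+}+\mathbf{C}^\dagger\big(\bmu{\rho}^{\msc{W},\msc{W}_*}+\bmu{\sigma}^{\msc{W}}-\bmu{\sigma}^{\msc{W}_*}-(\tau^\msc{W}-\tau^{\msc{W}_*})\bmu{1}\big),$$
and subtracting $\vv{\msc{W}^-}$ from both sides is exactly (\ref{eq:analogy_from_paraphrase_gen}).

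Since the argument is a change of variables plus a rearrangement, there is no genuine analytic obstacle; the only point needing care is the bookkeeping — verifying the size bounds propagate as above, and making explicit the disjointness $w_x\notin\mathcal{W}^+$, $w_{x^*}\notin\mathcal{W}^-$ (or the multiset convention) that lets $\vv{\msc{W}}$ and $\vv{\msc{W}_*}$ split additively. I would also flag, for the analogy discussion that follows, that $\bmu{\sigma}^{\msc{W}},\tau^\msc{W}$ now encode the statistical dependence of $w_x$ jointly with $\mathcal{W}^+$, and $\bmu{\sigma}^{\msc{W}_*},\tau^{\msc{W}_*}$ that of $w_{x^*}$ jointly with $\mathcal{W}^-$, which is precisely the form needed to interpret ``$w_x$ is to $w_{x^*}$'' as adding the words of $\mathcal{W}^+$ and removing those of $\mathcal{W}^-$.
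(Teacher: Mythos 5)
Your proposal is correct and is essentially the paper's own proof: the paper likewise obtains Cor~\ref{cor:word_transition} by setting $\mathcal{W}=\{w_x\}\cup\mathcal{W}^+$ and $\mathcal{W}_*=\{w_{x^*}\}\cup\mathcal{W}^-$ in Thm~\ref{thm:paraphrase_extended} and splitting $\vv{\msc{W}}$, $\vv{\msc{W}_*}$ additively; your extra bookkeeping on the size bounds and the implicit disjointness is a reasonable tightening of the same argument. (The paper also gives an alternate direct derivation from PMI terms in Appendix~\ref{app:cor_proof}, but that is not the route you need.)
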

\vspace{-10pt}
\begin{proof}
	Set $\mathcal{W}\!=\!\{w_x\}\cup\mathcal{W}^+$, $\mathcal{W}_*\!=\!\{w_{x^*}\}\cup\mathcal{W}^-$ in Thm~\ref{thm:paraphrase_extended}.
\end{proof}
\vspace{-12pt}
Cor \ref{cor:word_transition} shows how any word embedding $\vv{x^*}$ relates to a linear combination of other embeddings ($\vv{\mathsmaller{\Sigma}} \!=\! \vv{x} + \vv{\msc{W}^+}
 - \vv{\msc{W}^-}$), due to an equivalent relationship between columns of $\mathbf{PMI}$. 
Analogously to one-word (D\ref{def:paraphrase}) paraphrases, the vector difference $\vv{x^*}-\vv{\mathsmaller{\Sigma}}$ depends on the paraphrase error that reflects the relationship \underline{between} the two word sets $\mathcal{W}_*$, $\mathcal{W}$; and the dependence error that reflects statistical dependence between words \underline{within} each of $\mathcal{W}$ and $\mathcal{W}_*$. 

\begin{corollary}\label{cor:paraphrase_conc_gen}
For terms as defined above, 
$\vv{x^*}\!\approx\!\vv{x} + \vv{\msc{W}^+}
 - \vv{\msc{W}^-}$ \emph{if} $\mathcal{W}_*\!\approx_\textup{P}\!\mathcal{W}$ 
and $\winW$ and $\winW_*$ are materially independent or dependence terms materially cancel.
\end{corollary}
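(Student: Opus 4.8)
The plan is to obtain the statement as an immediate consequence of the \emph{exact} identity in Cor~\ref{cor:word_transition}. Write that identity as $\vv{x^*} = \vv{\mathsmaller{\Sigma}} + \mathbf{C}^\dagger\mathbf{e}$, where $\vv{\mathsmaller{\Sigma}} = \vv{x} + \vv{\msc{W}^+} - \vv{\msc{W}^-}$ and $\mathbf{e} = \bmu{\rho}^{\msc{W},\msc{W}_*} + \bmu{\sigma}^{\msc{W}} - \bmu{\sigma}^{\msc{W}_*} - (\tau^\msc{W} - \tau^{\msc{W}_*})\bmu{1}\in\RR{n}$. Then $\vv{x^*} - \vv{\mathsmaller{\Sigma}} = \mathbf{C}^\dagger\mathbf{e}$, so it suffices to show that each clause of the hypothesis forces $\mathbf{C}^\dagger\mathbf{e}$ to be small.

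First I would decompose $\mathbf{e} = \bmu{\rho}^{\msc{W},\msc{W}_*} + \mathbf{d}$, where $\mathbf{d} = \bmu{\sigma}^{\msc{W}} - \bmu{\sigma}^{\msc{W}_*} - (\tau^\msc{W} - \tau^{\msc{W}_*})\bmu{1}$ is the net dependence error. By D\ref{def:paraphrase_extended}, the hypothesis $\mathcal{W}\!\approx_\textup{P}\!\mathcal{W}_*$ states exactly that $\bmu{\rho}^{\msc{W},\msc{W}_*}$ is element-wise small. For $\mathbf{d}$ I would handle the two disjuncts of the hypothesis in turn: (i) if $\winW$ and $\winW_*$ are materially independent, then each of $\bmu{\sigma}^{\msc{W}}, \bmu{\sigma}^{\msc{W}_*}, \tau^\msc{W}, \tau^{\msc{W}_*}$ is small on its own --- by the characterisations following Thm~\ref{thm:paraphrase} (cf.\ Cor~\ref{cor:indep_words}), these vanish precisely under conditional, resp.\ mutual, independence --- so $\mathbf{d}$ is small by the triangle inequality; (ii) if instead the dependence terms materially cancel, then $\mathbf{d}\approx\bmu{0}$ by assumption. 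In either case $\mathbf{e}$ is element-wise small in $\RR{n}$.

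It then remains to push this through the fixed linear map $\mathbf{C}^\dagger$ (which is well-defined by A\ref{ass:full_rank}, and under A\ref{ass:pseudo-inv} is the projection respecting the additive structure already used to derive Cor~\ref{cor:word_transition}). Since $n$ is fixed, element-wise smallness of $\mathbf{e}$ implies its norm is small, and boundedness of $\mathbf{C}^\dagger$ as a linear operator gives that $\|\mathbf{C}^\dagger\mathbf{e}\|$ is small, i.e.\ $\vv{x^*}\approx\vv{x} + \vv{\msc{W}^+} - \vv{\msc{W}^-}$, as claimed.

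The one step that needs care --- and the main (if modest) obstacle --- is exactly this last one: a vector that is small in $\RR{n}$ is not automatically small after an arbitrary linear map. Here it is, because $\mathbf{C}^\dagger$ is a single fixed matrix of bounded operator norm, so it cannot amplify a genuinely small $\mathbf{e}$ without bound; the delicate behaviour discussed in Sec~\ref{sec:FPs} is the \emph{converse} situation --- a non-small $\mathbf{e}$ lying in the large null space of $\mathbf{C}^\dagger$ still projecting to something small --- which plays no role in this direction. I would also flag that, since ``materially independent'' and ``materially cancel'' are informal, the conclusion is a correspondingly informal $\approx$; a quantitative version would require explicit bounds on the operator norm of $\mathbf{C}^\dagger$ and on the norms of $\bmu{\rho}^{\msc{W},\msc{W}_*}$ and the individual dependence errors.
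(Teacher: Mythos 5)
Your proposal is correct and follows essentially the same route as the paper, which treats this corollary as an immediate consequence of the exact identity in Cor~\ref{cor:word_transition}: the hypothesis makes $\bmu{\rho}^{\msc{W},\msc{W}_*}$ small by D\ref{def:paraphrase_extended} and the net dependence error small (either term-by-term via independence, cf.\ Cor~\ref{cor:indep_words}, or by cancellation), so the fixed linear projection $\mathbf{C}^\dagger$ of a small total error is small. Your added remarks on operator-norm boundedness and the distinction from the Sec~\ref{sec:FPs} false-positive direction are accurate elaborations of what the paper leaves implicit.
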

\vspace{-2pt}
False positives can arise as discussed in Sec 
\ref{sec:FPs}.

\subsection{From Paraphrases to Analogies}

A special case of Cor \ref{cor:word_transition} gives:
\begin{corollary}\label{cor:word_trans_analogies} For any
$w_a, w_{a^*}, w_b, w_{b^*}\!\in\!\mathcal{E}$:
    \vspace{-5pt}
	\begin{align}\label{eq:analogy_from_paraphrase}
        \mathbf{w}_{b^*}
		=\  
        \mathbf{w}_{a^*}
        - \mathbf{w}_{a}
        + \mathbf{w}_{b}
        + \mathbf{C}^\dagger(
            \bmu{\rho}&^{\msc{W}, \msc{W}_*}
              + 
            \bmu{\sigma}^{\msc{W}}
            -  
            \bmu{\sigma}^{\msc{W}_*}
            \nonumber\\
            & 
             - 
            (\tau^\msc{W}
            -
            \tau^{\msc{W}_*})\bmu{1}
            )\, ,
	\end{align}
	where $\mathcal{W}\!=\!\{w_b, w_{a^*}\}$ and
$\mathcal{W}_*\!=\!\{w_{b^*}, w_a\}$.
\end{corollary}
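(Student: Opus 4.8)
The plan is to obtain Corollary~\ref{cor:word_trans_analogies} as the special case of Corollary~\ref{cor:word_transition} in which all four word sets are singletons. Concretely, I would set $w_x = w_a$, $w_{x^*} = w_{b^*}$, $\mathcal{W}^+ = \{w_b\}$ and $\mathcal{W}^- = \{w_{a^*}\}$ in Corollary~\ref{cor:word_transition}. These choices are legitimate since each of $\mathcal{W}^+, \mathcal{W}^-$ has size $1 < l-1$ (using A\ref{ass:all_log_probs} and the standing assumption that $l$ is suitably large). With this substitution, $\mathbf{w}_{\msc{W}^+} = \mathbf{w}_{b}$ and $\mathbf{w}_{\msc{W}^-} = \mathbf{w}_{a^*}$, so the linear combination $\mathbf{w}_{x} + \mathbf{w}_{\msc{W}^+} - \mathbf{w}_{\msc{W}^-}$ in (\ref{eq:analogy_from_paraphrase_gen}) becomes exactly $\mathbf{w}_{a} + \mathbf{w}_{b} - \mathbf{w}_{a^*}$, which matches the right-hand side of (\ref{eq:analogy_from_paraphrase}) after reordering. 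Simultaneously, the induced sets are $\mathcal{W} = \{w_x\} \cup \mathcal{W}^+ = \{w_a, w_b\}$ and $\mathcal{W}_* = \{w_{x^*}\} \cup \mathcal{W}^- = \{w_{b^*}, w_{a^*}\}$.

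There is, however, a bookkeeping subtlety: the corollary statement asserts $\mathcal{W} = \{w_b, w_{a^*}\}$ and $\mathcal{W}_* = \{w_{b^*}, w_a\}$, which does \emph{not} match the sets $\{w_a, w_b\}$, $\{w_{b^*}, w_{a^*}\}$ arising from the substitution above. This means the intended derivation uses a different assignment of the roles in Corollary~\ref{cor:word_transition}: namely $w_x = w_{a^*}$, $w_{x^*} = w_{b^*}$, $\mathcal{W}^+ = \{w_b\}$, $\mathcal{W}^- = \{w_a\}$, which gives linear combination $\mathbf{w}_{a^*} + \mathbf{w}_{b} - \mathbf{w}_{a}$ (exactly the RHS of (\ref{eq:analogy_from_paraphrase})) together with $\mathcal{W} = \{w_{a^*}\} \cup \{w_b\} = \{w_b, w_{a^*}\}$ and $\mathcal{W}_* = \{w_{b^*}\} \cup \{w_a\} = \{w_{b^*}, w_a\}$, matching the statement precisely. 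So the proof is: apply Corollary~\ref{cor:word_transition} with $w_x = w_{a^*}$, $w_{x^*} = w_{b^*}$, $\mathcal{W}^+ = \{w_b\}$, $\mathcal{W}^- = \{w_a\}$, and read off the result; the error vector $\bmu{\rho}^{\msc{W},\msc{W}_*} + \bmu{\sigma}^{\msc{W}} - \bmu{\sigma}^{\msc{W}_*} - (\tau^\msc{W} - \tau^{\msc{W}_*})\bmu{1}$ carries over verbatim because $\mathcal{W}$ and $\mathcal{W}_*$ are the same sets in both statements.

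The main thing to get right is therefore not any hard estimate but the correct matching of labels so that the resulting singleton $\mathcal{W}^+, \mathcal{W}^-$ reproduce precisely the sets $\mathcal{W}, \mathcal{W}_*$ named in the corollary; once that is pinned down, the proof is a one-line specialisation. I would also remark (either in the proof or immediately after) on the interpretive content: this shows that the parallelogram expression (\ref{eq:anlgy_expression}) is the statement that $\mathcal{W} = \{w_b, w_{a^*}\} \approx_{\textup{P}} \{w_{b^*}, w_a\} = \mathcal{W}_*$ with negligible dependence error — i.e.\ an analogy ``$w_a$ is to $w_{a^*}$ as $w_b$ is to $w_{b^*}$'' amounts to the crossed word sets paraphrasing one another — which is exactly the bridge between paraphrasing and analogies that Section~\ref{sec:analogy} set out to establish. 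No nontrivial obstacle is anticipated; the only risk is a transcription slip in the four-way role assignment.
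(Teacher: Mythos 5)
Your proposal is correct and is essentially the paper's own proof: a one-line specialisation of Corollary~\ref{cor:word_transition} with singleton transformation parameters. The paper's choice is $w_x\!=\!w_b$, $w_{x^*}\!=\!w_{b^*}$, $\mathcal{W}^+\!=\!\{w_{a^*}\}$, $\mathcal{W}^-\!=\!\{w_a\}$ rather than your $w_x\!=\!w_{a^*}$, $\mathcal{W}^+\!=\!\{w_b\}$, but both assignments produce the same sets $\mathcal{W}\!=\!\{w_b,w_{a^*}\}$, $\mathcal{W}_*\!=\!\{w_{b^*},w_a\}$ and the same linear combination, so the difference is purely a relabelling.
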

\vspace{-10pt}
\begin{proof}
	Set $w_x\!=\!w_b$, $w_{x^*}\!=\!w_{b^*}$, 
$\mathcal{W}^+\!=\!\{w_{a^*}\}$, $\mathcal{W}^-\!=\!\{w_a\}$ in Cor \ref{cor:word_transition}.
\end{proof}
\vspace{-5pt}
Thus we see that (\ref{eq:anlgy_expression}) holds \textit{if} 
$\{w_{b^*}, w_a\} \!\!\approx_\textup{P}\!\! \{w_b, w_{a^*}\!\}$  
and those word pairs exhibit \textit{similar dependence} (Sec \ref{sec:similar_dep}).
More generally, by Cor \ref{cor:word_transition} we see that (\ref{eq:anlgy_expression_gen}) is satisfied by 
$\mathbf{u}_{\mathfrak{A}} \!\approx\! \vv{\msc{W\!^+}} \sm\, \vv{\msc{W\!^-}}\!$
\textit{if} 
$\{w_{x^*}\!, \mathcal{W}^-\!\}  \!\approx_\textup{P}\!  \{w_x, \mathcal{W}^+\!\}$ 
$\forall (w_x, w_{x^*}\!)\!\in\! S_\mathfrak{A}$ 
for \textit{common} word sets $\mathcal{W}^+\!, \mathcal{W}^-\!\subseteq\!\mathcal{E}$ and each pair of paraphrasing word sets exhibit similar dependence.

This establishes sufficient conditions for the linear relationships observed in analogy embeddings (\ref{eq:anlgy_expression}, \ref{eq:anlgy_expression_gen}) in terms of semantic relationships, answering Q1. 
However, those relationships are \textit{paraphrases}, with no obvious connection to the ``\emph{$w_x$ is to $w_{x^*}...$}'' relationships of analogies. 
We now show that paraphrases sufficient for (\ref{eq:anlgy_expression}, \ref{eq:anlgy_expression_gen}) correspond to analogies by introducing the concept of \textit{word transformation}.

\subsection{Word Transformation} 

The paraphrase of a word set $\mathcal{W}$ by word $w_*$ (D\ref{def:paraphrase}) has, so far, been considered in terms of an equivalence between $\mathcal{W}$ and $w_*$ by reference to their induced distributions. 
Alternatively, that paraphrase can be interpreted as a \textit{transformation} from an arbitrary $w_s\!\in\!\mathcal{W}$ to $w_*$ by adding words $\mathcal{W}^+\!=\!\{\winW, w_i\!\neq\! w_s\}$. 
Notionally, $\mathcal{W}^+$ can be considered ``words that make $w_s$ more like $w_{*}$''. More precisely, $\winW^+$ \textit{add context} to $w_{s}$: we move from a distribution induced by $w_{s}$ alone to one induced by the \textit{joint} event of simultaneously observing $w_s$ and all $\winW^+$, a \textit{contextualised} occurrence of $w_{s}$ with an induced distribution closer that of $w_{*}$. 
A similar view can be taken of the associated embedding addition: starting with $\vv{s}$, add $\mathbf{w}_{i}\ \forall\winW^+$\! to approximate $\vv{*}$. Note that only \textit{addition} applies. 

Moving to D\ref{def:paraphrase_extended}, the paraphrase of one word set $\mathcal{W}$ by another $\mathcal{W}_*$ can be interpreted additively as starting with some $w_{x}\!\in\!\mathcal{W}$, $w_{x^*}\!\in\!\mathcal{W}_*$, and adding $\mathcal{W}^+\!\!=\!\{\winW, w_i\!\neq\! w_x\}$, $\mathcal{W}^-\!\!=\!\{\winW_*, w_i\!\neq\! w_{x^*}\!\}$, respectively, such that the resulting sets $\mathcal{W}$ and $\mathcal{W}_*$ induce similar distributions, i.e. paraphrase. In effect, context is added to both $w_{x}$ and $w_{x^*}$ until their contextualised cases $\mathcal{W}$ and $\mathcal{W}_*$ paraphrase (Fig         \ref{fig:word_transformation1}). Note  $\mathcal{W}$ and $\mathcal{W}_*$ may have no intuitive meaning and need not correspond to a single word, unlike D\ref{def:paraphrase} paraphrases.
Alternatively, such a paraphrase can be interpreted as a transformation from $w_x\!\in\!\mathcal{W}$ to $w_{x^*}\!\in\!\mathcal{W}^*$ by adding $\winW^+$ and \textit{subtracting} $\winW^-$. 
``Subtraction'' is effected by \textit{adding words to the other side}, i.e. to $w_{x^*}
$.\footnote{Analogous to standard algebra: if $x\!<\!y$, equality is achieved either by adding to $x$ or by subtracting from $y$.}
Just as adding words to $w_x$ adds or \textit{narrows} its context, subtracting words removes or \textit{broadens} context. 
Context is thus added and removed to transform from $w_{x}$ to $w_{x^*}$, in which the paraphrase between $\mathcal{W}$ and $\mathcal{W}_*$ effectively serves as an intermediate step (Fig \ref{fig:word_transformation2}). We refer to $\mathcal{W}^+$, $\mathcal{W}^-$ as \textit{transformation parameters}, which can be thought of as \textit{explaining the difference} between $w_x$ and $w_{x^*}$ with a ``richer dictionary'' than that available to D\ref{def:paraphrase} paraphrases by including \textit{differences} between words. More precisely, transformation parameters align the induced distributions to create a paraphrase.

\begin{figure}[tb]
    \centering
    \subfloat[Adding context to each of $w_x$ and $w_{x^*}$ to reach a paraphrase.\label{fig:word_transformation1}]{%
    \centering
    {
    \begin{tikzpicture}
         \node (A) [] at (0, 0, 0) {$\mathcal{W}\approx_\textup{P}\mathcal{W}_*$};
         \node (B) [] at (-2.5, -1, 0) {$w_x$};
         \node (C) [] at ( 2.5, -1, 0) {$w_{x^*}$};
         \draw[->, >=stealth] (B) -- node [above left] {$\bm{+}\mathcal{W}^+$}(A.west);
         \draw[->, >=stealth] (C) -- node [above right] {$\bm{+}\mathcal{W}^-$}(A.east);
     \end{tikzpicture}
     }
    }%
    \hfill
    \subfloat[Adding and subtracting context to \textit{transform} $w_x$ to $w_{x^*}$.\label{fig:word_transformation2}]{%
     \centering
     {
     \begin{tikzpicture}
         \node (A) [] at (0, 0, 0) {$\mathcal{W}\approx_\textup{P}\mathcal{W}_*$};
         \node (B) [] at (-2.5, -1, 0) {$w_x$};
         \node (C) [] at ( 2.5, -1, 0) {$w_{x^*}$};
         \draw[->, >=stealth] (B) -- node [above left] {$\bm{+}\mathcal{W}^+$}(A.west);
         \draw[->, >=stealth] (A.east) -- node [above right] {$\bm{-}\mathcal{W}^-$}(C);
         \draw[->, >=stealth, red, dashed] (B) -- node [below, red] {\textit{word transformation}} (C)
        ;
    \end{tikzpicture}
    }
    }
    \caption{\small Perspectives of the paraphrase            	
	\label{fig:word_transformation}
$\mathcal{W}\approx_\textup{P}\mathcal{W}_*$.}
\end{figure}
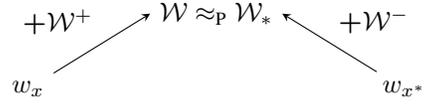
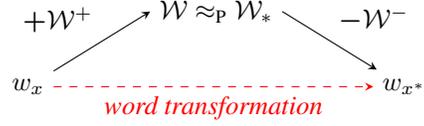
This interpretation show equivalence between a paraphrase  $\mathcal{W} \!\!\approx_\textup{P}\!\! \mathcal{W}_*$ and a word transformation -- a relationship between $w_x\!\in\!\mathcal{W}$ and $w_{x^*}\!\!\in\!\mathcal{W}_*$ based on the addition and subtraction of context that is mirrored in the addition and subtraction of embeddings.
Mathematical equivalence of the perspectives is reinforced by an alternate proof of Cor \ref{cor:word_transition} in Appendix \ref{app:cor_proof} that begins with terms in only $w_x$ and $w_{x^*\!}$, highlighting that \textit{any} words $\mathcal{W}^+$, $\mathcal{W}^-$ can be introduced, but only certain choices form the necessary paraphrase.
%
\begin{definition}\label{def:word_transformation}
	There exists a \underline{word transformation} from $w_x\!\in\!\mathcal{E}$ to $w_{x^*}\!\in\!\mathcal{E}$  with \underline{transformation parameters} $\mathcal{W}^+$, $\mathcal{W}^-\subseteq\mathcal{E}$ \emph{iff} $\{w_x\}\cup\mathcal{W}^+\!\approx_\textup{P}\!\{w_{x^*}\}\cup\mathcal{W}^-$.
\end{definition}
Note that transformation parameters may not be unique and always (trivially) include $\mathcal{W}^+\!=\!\{w_{x^*}\!\}$, $\mathcal{W}^-\!=\!\{w_{x}\}$.

\begin{figure*}[!htb]
    \centering

    \begin{minipage}{1\textwidth}
        \centering
            \begin{tikzpicture}[>=stealth, node distance=4.25cm, on grid, auto ]
                 \node (A) [text width=4.25cm,style={align=center}]{\begin{tabular}{c} ``\emph{$w_a$ is to $w_{a^*}$} \\ \emph{as} \\ \emph{$w_b$ is to $w_{b^*}$}''\end{tabular}};
                 \node (B) [right of=A, text width=2.5cm,style={align=center}]{\begin{tabular}{c} 
                 $w_a  \underset{{\msc{W}^-}}{\overset{\msc{W}^+}{\longrightarrow}} w_{a^*}$ 
                 \\ $\wedge$ \\ 
                 $w_b \underset{{\msc{W}^-}}{\overset{\msc{W}^+}{\longrightarrow}} w_{b^*}$ \end{tabular}
            };
                 \node (C) [right of=B, text width=4.25cm,style={align=center}]{\begin{tabular}{c} 
                 $\{w_a,\mathcal{W}^+\!\}\!\approx_\textup{P}\!\{w_{a^*},\mathcal{W}^-\!\}$ 
                 \\ $\wedge$ \\ 
                 $\{w_b,\mathcal{W}^+\!\}\!\approx_\textup{P}\!\{w_{b^*},\mathcal{W}^-\!\}$ \end{tabular}
            };
                 \node (D) [right of=C, text width=2.5cm,style={align=center}]{
                 \begin{tabular}{c} 
                 $\mathbf{w}_{a^*} - \mathbf{w}_{a}$ \\ $\approx$ \\ $ \mathbf{w}_{b^*} - \mathbf{w}_{b}$
                 \end{tabular}
            };
                \draw[implies-implies,double equal sign distance] (A) -- (B);
                \draw[implies-implies,double equal sign distance] (B) -- (C);
                \draw[-implies,double equal sign distance] (C) -- (D)
                ;
        \end{tikzpicture}
        \vspace{-5pt}
        \caption{\small Summary of steps to prove the relationship between analogies and word embeddings (omitting dependence error).\\$w_x \underset{{\msc{W}^-}}{\overset{\msc{W}^+}{\longrightarrow}} w_{x^*}$ denotes a word transformation $w_x$ to $w_{x^*}$ with parameters $\mathcal{W}^+, \mathcal{W}^-\!\subseteq\!\mathcal{E}$.}
    	\label{fig:summary}
    \end{minipage}
    \vspace{-5pt}
\end{figure*}
\subsection{Interpreting ``\emph{a is to a* as b is to b*}''}

With word transformation as a means of describing semantic difference between words, we mathematically interpret analogies. Specifically, we consider ``\emph{$w_x$ is to $w_{x^*}$}'' to refer to a transformation from $w_x$ to $w_{x^*}$ and an analogy to require an equivalence between such word transformations.

\begin{definition}\label{def:analogies}
	We say \emph{``$w_a$ is to $w_{a^*}$ as $w_b$ is to $w_{b^*}\!$''} for $w_a, w_b, w_{a^*}\!, w_{b^*}\!\!\in\!\mathcal{E}$ \emph{iff} there exist parameters $\mathcal{W}^+\!, \mathcal{W}^-\!\subseteq\!\mathcal{E}$ that simultaneously transform $w_a$ to $w_{a^*}$ and $w_b$ to $w_{b^*}$.
\end{definition}
%
We show that the linear relationships between word embeddings of analogies (\ref{eq:anlgy_expression}, \ref{eq:anlgy_expression_gen}) follow from D\ref{def:analogies}.

\begin{lemma}
\label{lem:analogies}
If \emph{``$w_a$ is to $w_{a^*}$ as $w_b$ is to $w_{b^*}$''} by D\ref{def:analogies} with transformation parameters $\mathcal{W}^+, \mathcal{W}^-\!\subseteq\!\mathcal{E}$, then:
\vspace{-5pt}
    \begin{align}\label{eq:lem_analogies}
        \textup{PMI}_{b^*\!}
        =  
          \textup{PMI}_{a^*\!}
        & - \textup{PMI}_{a}
        + \textup{PMI}_{b}
        \nonumber\\
            &  + 
            \bmu{\rho}^{\msc{W}^b, \msc{W}^b_*}
            -
            \bmu{\rho}^{\msc{W}^a, \msc{W}^a_*}
 			\nonumber\\
            &  + 
            (\bmu{\sigma}^{\msc{W}^b}
             \sm\,
            \bmu{\sigma}^{\msc{W}^b_*})		
              - 
            (\bmu{\sigma}^{\msc{W}^a}
             \sm\, 
            \bmu{\sigma}^{\msc{W}^a_*})		
            \nonumber\\
            &  - 
            ((\tau^\msc{W}^b
             \sm\,
            \tau^{\msc{W}^b_*})
            -
            (\tau^\msc{W}^a
             \sm\,
            \tau^{\msc{W}^a_*}))\bmu{1}
            ,
    \end{align}
where 
 $\mathcal{W}^x\!=\!\{w_x\}\cup\mathcal{W}^+$\!,\, $\mathcal{W}^x_*\!=\!\{w_{x^*}\!\}\cup\mathcal{W}^-$ for $x \!\in\! \{a, b\}$ and  $\bmu{\rho}^{\msc{W}^b, \msc{W}^b_*},            \bmu{\rho}^{\msc{W}^a, \msc{W}^a_*}$ are small.
\end{lemma}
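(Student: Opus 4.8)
The plan is to invoke Lemma~\ref{lem:paraphrase_extended} twice, with the two word-set pairs supplied by the analogy, and subtract the resulting identities. By Definition~\ref{def:analogies}, the hypothesis \emph{``$w_a$ is to $w_{a^*}$ as $w_b$ is to $w_{b^*}$''} provides \emph{common} transformation parameters $\mathcal{W}^+,\mathcal{W}^-\!\subseteq\!\mathcal{E}$ that transform both $w_a\!\to\!w_{a^*}$ and $w_b\!\to\!w_{b^*}$; unpacking Definition~\ref{def:word_transformation}, this is precisely the pair of paraphrases $\mathcal{W}^a\!\approx_\textup{P}\!\mathcal{W}^a_*$ and $\mathcal{W}^b\!\approx_\textup{P}\!\mathcal{W}^b_*$, with $\mathcal{W}^x\!=\!\{w_x\}\cup\mathcal{W}^+$ and $\mathcal{W}^x_*\!=\!\{w_{x^*}\}\cup\mathcal{W}^-$ for $x\in\{a,b\}$. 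In particular, by Definition~\ref{def:paraphrase_extended}, both paraphrase errors $\bmu{\rho}^{\mathcal{W}^a,\mathcal{W}^a_*}$ and $\bmu{\rho}^{\mathcal{W}^b,\mathcal{W}^b_*}$ are element-wise small, which establishes the last clause of the Lemma for free.

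First I would apply Lemma~\ref{lem:paraphrase_extended} with $\mathcal{W}=\mathcal{W}^a$, $\mathcal{W}_*=\mathcal{W}^a_*$ (admissible since $|\mathcal{W}^+|,|\mathcal{W}^-|<l-1$ forces $|\mathcal{W}^a|,|\mathcal{W}^a_*|<l$), then split each side's sum as $\sum_{w_i\in\mathcal{W}^a}\textup{PMI}_i=\textup{PMI}_a+\sum_{w_i\in\mathcal{W}^+}\textup{PMI}_i$ and likewise $\sum_{w_i\in\mathcal{W}^a_*}\textup{PMI}_i=\textup{PMI}_{a^*}+\sum_{w_i\in\mathcal{W}^-}\textup{PMI}_i$. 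This yields an identity expressing $\textup{PMI}_{a^*}+\sum_{\mathcal{W}^-}\textup{PMI}_i$ in terms of $\textup{PMI}_a+\sum_{\mathcal{W}^+}\textup{PMI}_i$ plus the $a$-indexed $\bmu{\rho}$, $\bmu{\sigma}$ and $\tau$ error terms. Repeating the step verbatim with $b$ in place of $a$ gives the analogous identity. Subtracting the $a$-identity from the $b$-identity, the shared contributions $\sum_{\mathcal{W}^+}\textup{PMI}_i$ and $\sum_{\mathcal{W}^-}\textup{PMI}_i$ cancel, and collecting the four surviving columns into the form $\textup{PMI}_{b^*}=\textup{PMI}_{a^*}-\textup{PMI}_a+\textup{PMI}_b+(\text{error terms})$ reproduces exactly (\ref{eq:lem_analogies}).

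The derivation is essentially bookkeeping, so the only places that need care are the two where the structure could silently break. One is the size budget: Lemma~\ref{lem:paraphrase_extended} requires $|\mathcal{W}|,|\mathcal{W}_*|<l$, which is why this Lemma's hypothesis demands $|\mathcal{W}^+|,|\mathcal{W}^-|<l-1$. The other, slightly subtler, is that the sum-splitting $\sum_{w_i\in\{w_x\}\cup\mathcal{W}^+}\textup{PMI}_i=\textup{PMI}_x+\sum_{w_i\in\mathcal{W}^+}\textup{PMI}_i$ presumes $w_x\notin\mathcal{W}^+$ and $w_{x^*}\notin\mathcal{W}^-$; I would either record this as a standing restriction on admissible transformation parameters or read $\{w_x\}\cup\mathcal{W}^+$ as a multiset union so the split holds by construction. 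Subject to that, there is no genuine obstacle — all the mathematical content sits in Lemma~\ref{lem:paraphrase_extended} and Definition~\ref{def:analogies}, and Lemma~\ref{lem:analogies} is just the ``difference of two generalised paraphrases'' corollary that combines them.
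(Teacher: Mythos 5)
Your proposal is correct and matches the paper's own argument: the paper likewise takes two instances of the generalised-paraphrase identity (via Cor~\ref{cor:word_transition}, i.e.\ Lemma~\ref{lem:paraphrase_extended} with $\mathcal{W}^x=\{w_x\}\cup\mathcal{W}^+$, $\mathcal{W}^x_*=\{w_{x^*}\}\cup\mathcal{W}^-$), subtracts them so the common $\mathcal{W}^+,\mathcal{W}^-$ contributions cancel, and obtains smallness of both $\bmu{\rho}$ terms from D\ref{def:word_transformation} and D\ref{def:analogies}. Your explicit sum-splitting and the size/disjointness caveats are just the unpacked form of that same step, so there is nothing materially different to flag.
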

\vspace{-15pt}
\begin{proof}
Let $\mathcal{W}\!=\!\mathcal{W}^x$,  $\mathcal{W}_*\!=\!\mathcal{W}_*^x$ for $x\!\in\!\{a, b\}$ in instances of Cor \ref{cor:word_transition} and take the difference. $\mathcal{W}^x$ paraphrases $\mathcal{W}^x_*$ for $x\!\in\!\{a, b\}$ by D\ref{def:word_transformation} and D\ref{def:analogies}. 
\end{proof}
\begin{theorem}[Analogies]
\label{thm:analogies}
If \emph{``$w_a$ is to $w_{a^*}$ as $w_b$ is to $w_{b^*}$''} by  D\ref{def:analogies} with $\mathcal{W}^+, \mathcal{W}^-\!\subseteq\!\mathcal{E}$ , then:
\vspace{-5pt}
\begin{align*}
        \mathbf{w}_{b^*}
		=\  
        \mathbf{w}_{a^*}
        & - \mathbf{w}_{a}
        + \mathbf{w}_{b}
 			\nonumber\\
            &
        + \mathbf{C}^\dagger(
            \bmu{\rho}^{\msc{W}^b, \msc{W}^b_*}
            -
            \bmu{\rho}^{\msc{W}^a, \msc{W}^a_*}
 			\nonumber\\
            &  + 
            (\bmu{\sigma}^{\msc{W}^b}
             \sm\,
            \bmu{\sigma}^{\msc{W}^b_*})		
              - 
            (\bmu{\sigma}^{\msc{W}^a}
             \sm\, 
            \bmu{\sigma}^{\msc{W}^a_*})		
            \nonumber\\
            &  - 
            ((\tau^\msc{W}^b
             \sm\,
            \tau^{\msc{W}^b_*})
            -
            (\tau^\msc{W}^a
             \sm\,
            \tau^{\msc{W}^a_*}))\bmu{1}
            ) .
\end{align*}
with terms as defined in Lem~\ref{lem:analogies}.
\end{theorem}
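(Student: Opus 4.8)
The plan is to derive the embedding identity directly from the PMI-vector identity already established in Lemma~\ref{lem:analogies}, by left-multiplying both sides with the pseudo-inverse $\mathbf{C}^\dagger$, exactly as Theorem~\ref{thm:paraphrase} follows from Lemma~\ref{lem:paraphrase} and Theorem~\ref{thm:paraphrase_extended} from Lemma~\ref{lem:paraphrase_extended}. Concretely, Lemma~\ref{lem:analogies} gives an equation in $\RR{n}$ of the form $\textup{PMI}_{b^*} = \textup{PMI}_{a^*} - \textup{PMI}_{a} + \textup{PMI}_{b} + \mathbf{E}$, where $\mathbf{E}$ collects the paraphrase errors $\bmu{\rho}^{\msc{W}^b,\msc{W}^b_*}-\bmu{\rho}^{\msc{W}^a,\msc{W}^a_*}$, the dependence errors $(\bmu{\sigma}^{\msc{W}^b}-\bmu{\sigma}^{\msc{W}^b_*})-(\bmu{\sigma}^{\msc{W}^a}-\bmu{\sigma}^{\msc{W}^a_*})$, and the scalar-times-$\bmu{1}$ term. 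Applying the linear projection $f(\cdot)=\mathbf{C}^\dagger(\cdot)$ to this identity, and using that $f(\textup{PMI}_i)=\mathbf{w}_i$ (the least-squares solution of $\mathbf{W}^\top\mathbf{C}=\mathbf{PMI}$, which exists by A\ref{ass:full_rank}) together with linearity of $f$, the left-hand side becomes $\mathbf{w}_{b^*}$, the three leading PMI columns on the right become $\mathbf{w}_{a^*}-\mathbf{w}_{a}+\mathbf{w}_{b}$, and the residual is precisely $\mathbf{C}^\dagger\mathbf{E}$, which is exactly the claimed statement.

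Next I would record why Lemma~\ref{lem:analogies} is applicable under the hypothesis. By D\ref{def:analogies} the common parameters $\mathcal{W}^+,\mathcal{W}^-$ simultaneously transform $w_a\!\to\!w_{a^*}$ and $w_b\!\to\!w_{b^*}$, so by D\ref{def:word_transformation} we have $\mathcal{W}^a\!\approx_\textup{P}\!\mathcal{W}^a_*$ and $\mathcal{W}^b\!\approx_\textup{P}\!\mathcal{W}^b_*$ with $\mathcal{W}^x=\{w_x\}\cup\mathcal{W}^+$ and $\mathcal{W}^x_*=\{w_{x^*}\}\cup\mathcal{W}^-$; this is what guarantees both $\bmu{\rho}^{\msc{W}^a,\msc{W}^a_*}$ and $\bmu{\rho}^{\msc{W}^b,\msc{W}^b_*}$ are small, as required. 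The dependence errors $\bmu{\sigma}$ and $\tau$ are carried explicitly and are \emph{not} controlled by the analogy hypothesis alone: they vanish only when each $\mathcal{W}^x$, $\mathcal{W}^x_*$ has negligible internal statistical dependence, or when these contributions cancel between the $a$ and $b$ instances (e.g.\ when the two word pairs exhibit similar dependence, cf.\ the discussion following Cor~\ref{cor:word_trans_analogies}); hence they appear in the statement rather than being assumed away.

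I expect essentially no obstacle in this last step itself — it is one line of algebra — with all the substance residing in Lemma~\ref{lem:analogies}, which in turn comes from applying Cor~\ref{cor:word_transition} (equivalently Thm~\ref{thm:paraphrase_extended}) to the two word-set pairs $(\mathcal{W}^a,\mathcal{W}^a_*)$ and $(\mathcal{W}^b,\mathcal{W}^b_*)$ and subtracting. The one point needing care is the passage from $\RR{n}$ to $\RR{d}$: for the exact least-squares projection this is an identity, but for a non-linear factorisation such as W2V, $f=f_{W2V}$ is not linear, so $f$ distributing over the right-hand side of Lemma~\ref{lem:analogies} holds only approximately and must be justified by A\ref{ass:pseudo-inv} (approximate additive homomorphism) — which is why the theorem is stated for $\mathbf{C}^\dagger$ and the W2V case is treated separately, as in Cor~\ref{cor:thm1_w2v}, by a substitution that introduces an additional $\log k$ term. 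Keeping the paraphrase and dependence errors explicit, rather than assuming them away, is precisely what makes this an exact equality plus an interpretable residual and distinguishes it from prior work.
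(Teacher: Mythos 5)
Your proposal is correct and matches the paper's own proof: Theorem~\ref{thm:analogies} is obtained exactly by left-multiplying the identity of Lemma~\ref{lem:analogies} by $\mathbf{C}^\dagger$, with the applicability of that lemma guaranteed by D\ref{def:analogies} and D\ref{def:word_transformation} just as you note. Your added remarks on the least-squares projection, A\ref{ass:pseudo-inv}, and the separate W2V treatment are consistent with the paper but not needed beyond its one-line argument.
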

\vspace{-5pt}
\begin{proof}
Multiply (\ref{eq:lem_analogies}) by $\mathbf{C}^\dagger$.
\end{proof}
More generally, if D\ref{def:analogies} applies for a set of ordered word pairs $S = \{(w_{x}, w_{x^*}\!)\}$, i.e.  \emph{``$w_a$ is to $w_{a^*}$ as $w_b$ is to $w_{b^*}$''} 
$\forall\, (w_{a}, w_{a^*}\!)$, $(w_{b}, w_{b^*}\!)\!\in\! S$
with transformation parameters $\mathcal{W}^+, 
\mathcal{W}^-\!\subseteq\!\mathcal{E}$,
then each set $\{w_{x^*}, \mathcal{W}^-\!\}$ must paraphrase $\{w_{x}, \mathcal{W}^+\!\}$ by D\ref{def:word_transformation}, and (\ref{eq:analogy_from_paraphrase_gen}) holds with small paraphrase error. By this and Thm~\ref{thm:analogies} we know that word embeddings  of an analogy $\mathbf{w}_{a},  \mathbf{w}_{b}, \mathbf{w}_{a^*}, \mathbf{w}_{b^*}$ satisfy linear relationships (\ref{eq:anlgy_expression}, \ref{eq:anlgy_expression_gen}),  subject to dependence error.

A few questions remain: how to find appropriate transformation parameters; and, given non-uniqueness, which to choose? Addressing these in reverse order:

\keypoint{Transformation Parameter Equivalence}

By Lem~\ref{lem:analogies}, if \emph{``$w_a$ is to $w_{a^*}$ as $w_b$ is to $w_{b^*}$''} then, subject to dependence error: 
\begin{equation}\label{eq:transform_param_equiv2}
        \textup{PMI}_{b^*\!} - \textup{PMI}_{b} 
        \approx  
        \textup{PMI}_{a^*\!} - \textup{PMI}_{a}\ .
\end{equation}
If parameters $\mathcal{W}_2^+,\mathcal{W}_2^-$ exist that (\textit{w.l.o.g.}) transform $w_a$ to $w_{a^*}$ then (\ref{eq:lem_analogies}) holds  by suitably redefining $\mathcal{W}^x$, $\mathcal{W}^x_*$, in which $\bmu{\rho}^{\msc{W}^a, \msc{W}^a_*}$ is small but nothing is known of $\bmu{\rho}^{\msc{W}^b, \msc{W}^b_*}$. Thus, subject to dependence error:
\begin{equation}\label{eq:transform_param_equiv3}
        \qquad\textup{PMI}_{b^*\!} - \textup{PMI}_{b} 
        \approx  
        \textup{PMI}_{a^*\!} - \textup{PMI}_{a} + 
        	\bmu{\rho}^{\msc{W}^b, \msc{W}^b_*}\ .
\end{equation}
By (\ref{eq:transform_param_equiv2}), (\ref{eq:transform_param_equiv3}), subject to dependence error, $\bmu{\rho}^{\msc{W}^b, \msc{W}^b_*}$ is also small and $\mathcal{W}_2^+,\mathcal{W}_2^-$ must also transform $w_b$ to $w_{b^*}$. Thus transformation parameters of any analogical pair transform all pairs and all applicable transformation parameters can be considered equivalent, up to dependence error.
\begin{corollary}\label{cor:transform_param_equiv}
For analogy $\mathfrak{A}$, if parameters $\mathcal{W}^+$, $\mathcal{W}^-\!\subseteq\!\mathcal{E}$ transform $w_x$ to $w_{x^*}$ for any $(w_x,w_{x^*})\!\in\!S_\mathfrak{A}$, then $\mathcal{W}^+$, $\mathcal{W}^-$ simultaneously transform $w_x$ to $w_{x^*}$ $\forall (w_x,w_{x^*})\!\in\!S_\mathfrak{A}$. 
\end{corollary}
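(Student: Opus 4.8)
The plan is to repackage the chain of approximate identities worked out in the paragraph preceding the statement, where the essential content already appears. First I would use D\ref{def:analogies}: by hypothesis $\mathfrak{A}$ is a genuine analogy, so \emph{some} parameters simultaneously transform every pair in $S_\mathfrak{A}$, and feeding any two such pairs into Lem~\ref{lem:analogies} --- then dropping the paraphrase errors (small by D\ref{def:word_transformation}) and the dependence error --- yields the common-difference relation $\textup{PMI}_{x^*}-\textup{PMI}_x \approx \textup{PMI}_{y^*}-\textup{PMI}_y$ for all $(w_x,w_{x^*}),(w_y,w_{y^*})\in S_\mathfrak{A}$, i.e. (\ref{eq:transform_param_equiv2}). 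This step invokes the analogy hypothesis only to know that a common difference vector exists; it does not yet mention the particular parameters $\mathcal{W}^+,\mathcal{W}^-$ named in the statement.

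Next I would bring in the given parameters $\mathcal{W}^+,\mathcal{W}^-$, which transform one fixed pair $(w_x,w_{x^*})\in S_\mathfrak{A}$; by D\ref{def:word_transformation} this says $\{w_x\}\cup\mathcal{W}^+ \approx_\textup{P} \{w_{x^*}\}\cup\mathcal{W}^-$, i.e. $\bmu{\rho}^{\msc{W}^x,\msc{W}^x_*}$ is small. For an arbitrary second pair $(w_y,w_{y^*})\in S_\mathfrak{A}$ I would take the two instances of Cor~\ref{cor:word_transition} obtained by setting $\mathcal{W}=\{w_x\}\cup\mathcal{W}^+,\ \mathcal{W}_*=\{w_{x^*}\}\cup\mathcal{W}^-$ and $\mathcal{W}=\{w_y\}\cup\mathcal{W}^+,\ \mathcal{W}_*=\{w_{y^*}\}\cup\mathcal{W}^-$, and subtract them: the common $\mathcal{W}^+,\mathcal{W}^-$ contributions cancel, and using that $\bmu{\rho}^{\msc{W}^x,\msc{W}^x_*}$ is small leaves the analogue of (\ref{eq:transform_param_equiv3}), namely $\textup{PMI}_{y^*}-\textup{PMI}_y \approx \textup{PMI}_{x^*}-\textup{PMI}_x + \bmu{\rho}^{\msc{W}^y,\msc{W}^y_*}$, up to dependence error. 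Comparing with the common-difference relation of the first step forces $\bmu{\rho}^{\msc{W}^y,\msc{W}^y_*}$ to be small as well, so $\{w_y\}\cup\mathcal{W}^+ \approx_\textup{P} \{w_{y^*}\}\cup\mathcal{W}^-$, which by D\ref{def:word_transformation} is precisely the statement that $\mathcal{W}^+,\mathcal{W}^-$ transform $w_y$ to $w_{y^*}$. Since $(w_y,w_{y^*})$ was an arbitrary element of $S_\mathfrak{A}$, these parameters transform all of $S_\mathfrak{A}$ simultaneously.

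The hard part is not any new idea but the error bookkeeping, which is exactly why the statement (like its neighbours) is qualified ``up to dependence error'': one must check that the dependence terms $\bmu{\sigma}^\bullet,\tau^\bullet$ produced when Cor~\ref{cor:word_transition} is applied to the fixed pair versus the arbitrary pair are close enough that cancelling them is legitimate, and keep track of the fact that three approximate equalities are chained, so ``$\bmu{\rho}^{\msc{W}^y,\msc{W}^y_*}$ small'' really means small up to the accumulated dependence error. Consistent with the paper's treatment of ``paraphrase'' and ``transform'' as element-wise-small rather than exact notions, I would flag this caveat explicitly, exactly as the lead-in paragraph does, rather than attempt to quantify it.
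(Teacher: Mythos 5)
Your proposal is correct and follows essentially the same route as the paper: the paper's own justification is exactly the lead-in paragraph you repackage, deriving the common-difference relation (\ref{eq:transform_param_equiv2}) from D\ref{def:analogies} via Lem~\ref{lem:analogies}, then re-applying the same identity with the given parameters so that only $\bmu{\rho}^{\msc{W}^b,\msc{W}^b_*}$ is unknown (\ref{eq:transform_param_equiv3}), and comparing the two to force it small, all ``subject to dependence error.'' Your explicit flagging of the accumulated dependence-error caveat matches the paper's qualification, so no gap remains.
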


\keypoint{Identifying Transformation Parameters}

To identify ``words that explain the difference between other words'' might, in general, be non-trivial. However, by Cor \ref{cor:transform_param_equiv}, transformation parameters for analogy $\mathfrak{A}$ can simply be chosen as $\mathcal{W}^+\!=\!\{w_{x^*}\}$, $\mathcal{W}^-\!=\!\{w_x\}$ for any $(w_x,w_{x^*})\!\in\!S_\mathfrak{A}$.\footnote{In the case of an analogical question \emph{``$w_a$ is to $w_{a^*}$ as $w_b$ is to ... $?$''}, there is only one choice: $\mathcal{W}^+\!=\!\{w_{a^*}\}$, $\mathcal{W}^-\!=\!\{w_a\}$.}
Making an arbitrary choice, Thm~\ref{thm:analogies} simplifies to:
\begin{corollary}\label{cor:analogies_solved}
If ``\emph{$w_a$ is to $w_{a^*}$ as $w_b$ is to $w_{b^*}$}'' then:
    \begin{align}\label{eq:analogies_final}
        \mathbf{w}_{b^*\!}
         = 
          \mathbf{w}_{a^*\!}
        - \mathbf{w}_{a}
        + \mathbf{w}_{b}
             + \mathbf{C}^\dagger(
            \bmu{\rho}^{\msc{W}, \msc{W}_*}
            &+ 
            \bmu{\sigma}^{\msc{W}}
             - 
            \bmu{\sigma}^{\msc{W}_*}
            \nonumber\\
              - &
            (\tau^\msc{W}
             \sm\,
            \tau^{\msc{W}_*})\bmu{1}
            )
             ,
    \end{align}
where 
$\mathcal{W}\!=\!\{w_b, w_{a^*\!}\}$, $\mathcal{W}_*\!=\!\{w_{b^*\!}, w_{a}\}$ and $\bmu{\rho}^{\msc{W}, \msc{W}_*\!}$ is small.
\end{corollary}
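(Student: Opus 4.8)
The plan is to obtain Cor~\ref{cor:analogies_solved} as the specialisation of Thm~\ref{thm:analogies} to the most convenient admissible transformation parameters. By D\ref{def:analogies}, the hypothesis ``$w_a$ is to $w_{a^*}$ as $w_b$ is to $w_{b^*}$'' means \emph{some} parameters $\mathcal{W}^+,\mathcal{W}^-\!\subseteq\!\mathcal{E}$ simultaneously transform $w_a$ to $w_{a^*}$ and $w_b$ to $w_{b^*}$; in particular the pairs $(w_a,w_{a^*})$ and $(w_b,w_{b^*})$ lie in the set $S_\mathfrak{A}$ of the analogy they define. First I would invoke the remark following D\ref{def:word_transformation}: the trivial parameters $\mathcal{W}^+\!=\!\{w_{a^*}\}$, $\mathcal{W}^-\!=\!\{w_a\}$ always transform $w_a$ to $w_{a^*}$, since then $\{w_a\}\cup\mathcal{W}^+$ and $\{w_{a^*}\}\cup\mathcal{W}^-$ coincide as sets and the paraphrase error vanishes. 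Cor~\ref{cor:transform_param_equiv} then forces this same pair of parameters to transform \emph{every} pair in $S_\mathfrak{A}$, hence $w_b$ to $w_{b^*}$ as well; equivalently, $\{w_b,w_{a^*}\}\approx_\textup{P}\{w_{b^*},w_a\}$ by D\ref{def:word_transformation}, so $\bmu{\rho}^{\msc{W},\msc{W}_*}$ is small for $\mathcal{W}\!=\!\{w_b,w_{a^*}\}$, $\mathcal{W}_*\!=\!\{w_{b^*},w_a\}$.

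With the validity of this parameter choice established, I would apply Thm~\ref{thm:analogies} (or, more directly, Cor~\ref{cor:word_trans_analogies}, which already carries $\mathbf{w}_{b^*}$ on the left) at $\mathcal{W}^+\!=\!\{w_{a^*}\}$, $\mathcal{W}^-\!=\!\{w_a\}$. In Thm~\ref{thm:analogies} this makes $\mathcal{W}^a\!=\!\{w_a\}\cup\{w_{a^*}\}$ and $\mathcal{W}^a_*\!=\!\{w_{a^*}\}\cup\{w_a\}$ the \emph{same} set, so $\bmu{\rho}^{\msc{W}^a,\msc{W}^a_*}\!=\!\bmu{0}$, $\bmu{\sigma}^{\msc{W}^a}\!=\!\bmu{\sigma}^{\msc{W}^a_*}$ and $\tau^{\msc{W}^a}\!=\!\tau^{\msc{W}^a_*}$; every $a$-indexed error term therefore cancels in pairs. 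What survives is exactly (\ref{eq:analogies_final}) with $\mathcal{W}\!=\!\mathcal{W}^b\!=\!\{w_b,w_{a^*}\}$, $\mathcal{W}_*\!=\!\mathcal{W}^b_*\!=\!\{w_{b^*},w_a\}$ and residual paraphrase error $\bmu{\rho}^{\msc{W}^b,\msc{W}^b_*}\!=\!\bmu{\rho}^{\msc{W},\msc{W}_*}$, already shown to be small. Via Cor~\ref{cor:word_trans_analogies} the route is even shorter: instantiate Cor~\ref{cor:word_transition} with $w_x\!=\!w_b$, $w_{x^*}\!=\!w_{b^*}$, $\mathcal{W}^+\!=\!\{w_{a^*}\}$, $\mathcal{W}^-\!=\!\{w_a\}$ to get (\ref{eq:analogy_from_paraphrase}), then add the smallness of $\bmu{\rho}^{\msc{W},\msc{W}_*}$ from the first paragraph.

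There is essentially no computational content, so I do not anticipate a real obstacle; the single point that needs care is the application of Cor~\ref{cor:transform_param_equiv}. Its hypothesis demands that \emph{some} admissible parameters transform at least one analogical pair, and I supply the trivial parameters for $(w_a,w_{a^*})$, whose admissibility is precisely the remark after D\ref{def:word_transformation}. Once that is secured, ``transformation parameter equivalence'' lets me swap whatever parameters D\ref{def:analogies} hands over for the convenient pair $\{w_{a^*}\},\{w_a\}$ without forfeiting the paraphrase guarantee for the $b$-pair, and the remainder is the routine cancellation above.
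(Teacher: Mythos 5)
Your proof is correct and takes essentially the same route as the paper: it instantiates Thm~\ref{thm:analogies} (equivalently Cor~\ref{cor:word_trans_analogies}) with the trivial parameters $\mathcal{W}^+\!=\!\{w_{a^*}\}$, $\mathcal{W}^-\!=\!\{w_a\}$, whose admissibility for the whole analogy is secured via Cor~\ref{cor:transform_param_equiv}. Your explicit remark that the $a$-indexed error terms vanish because $\mathcal{W}^a$ and $\mathcal{W}^a_*$ coincide as sets just spells out a cancellation the paper's one-line proof leaves implicit.
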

\vspace{-10pt}
\begin{proof}
Let $\mathcal{W}^+\!=\!\{w_{a^*\!}\}, \mathcal{W}^-\!=\!\{w_{a}\}$ in Thm~\ref{thm:analogies}.
\end{proof}
\vspace{-5pt}
We arrive back at (\ref{eq:analogy_from_paraphrase}) but now link directly to analogies, proving that word embeddings of analogies satisfy linear relationships (\ref{eq:anlgy_expression}) and (\ref{eq:anlgy_expression_gen}), subject to dependence error. Fig \ref{fig:summary} shows a summary of all steps to prove Cor \ref{cor:analogies_solved}. D\ref{def:analogies} also provides a mathematical interpretation of what we mean when we say ``\emph{$w_a$ is to $w_{a^*}$ as $w_b$ is to $w_{b^*}$}''.

\subsection{Example}

To demonstrate the concepts developed, we consider the canonical analogy $\mathfrak{A}^*$: \textit{``man is to king as woman is to queen''}, for which $S_{\mathfrak{A}^*} \!=\!\{(man$, $king)$, $(woman$, $queen)\}$. By D\ref{def:analogies}, there exist parameters $\mathcal{W}^+, \mathcal{W}^-\!\subseteq\!\mathcal{E}$ that simultaneously transform $man$ to $king$ and $woman$ to $queen$, which (by Cor \ref{cor:transform_param_equiv}) can be chosen to be $\mathcal{W}^+\!=\!\{queen\}$, $\mathcal{W}^-\!=\!\{woman\}$.  Thus $\mathfrak{A}^*$ implies that $\{man,\,queen\} \!\approx_\textup{P}\!\{king,\,woman\}$  and $\{woman,\, queen\} \!\approx_\textup{P}\! \{queen,\, woman\}$, the latter being trivially true. By Cor \ref{cor:word_transition}, $\mathfrak{A}^*$ therefore implies:
\begin{align*} 
    \vv{Q}\  =\  \vv{K} - \vv{M} + \vv{W} 
    + \mathbf{C}^\dagger(
            \bmu{\rho}^{\msc{W}, \msc{W}_*}
            & + 
            \bmu{\sigma}^{\msc{W}}
             - 
            \bmu{\sigma}^{\msc{W}_*}
            \nonumber\\
            &\ \   - 
            (\tau^\msc{W}
             \sm\,
            \tau^{\msc{W}_*})\bmu{1}
		)\ ,
\end{align*}
\vspace{-5pt}
where we abbreviate words by their initials and, explicitly:
\begin{align*} 
 \bmu{\rho}^{\msc{W}, \msc{W}_*\!} 
 & = 
 \log \tfrac{p(c_j|w_Q,w_M)}{p(c_j|w_W,w_K)} &  \text{(whi}&\text{ch must be small)},
 \\
\bmu{\sigma}^{\msc{W}} 
 & \!=\! 
 \log \tfrac{p(w_W,       w_K|c_j)}{p(w_W|c_j)p(w_K|c_j)}
 ,\  &
\tau^\msc{W} 
 & \!=\! 
 \log \tfrac{p(w_W , w_K)}{p(w_W)p(w_K)} ,
 \\
\bmu{\sigma}^{\msc{W}_*} 
  &\!=\!  
 \log \tfrac{p(w_Q,       w_M|c_j)}{p(w_Q|c_j)p(w_M|c_j)}
 ,\ &
\tau^\msc{W_*} 
 & \!=\! 
 \log \tfrac{p(w_Q,  w_M)}{p(w_Q)p(w_M)} \ .
\end{align*}

Thus $\vv{Q} \approx \vv{K} - \vv{M} + \vv{W}$ subject to the accuracy with which $\{man,queen\}$ paraphrases $\{king,woman\}$ and statistical dependencies within those word pairs (see Fig \ref{fig:gap_explained}).

\begin{figure}[!t]
 \centering
    \includegraphics[width=\linewidth, trim=1.25cm 1.9cm 1cm 3.0cm, clip]{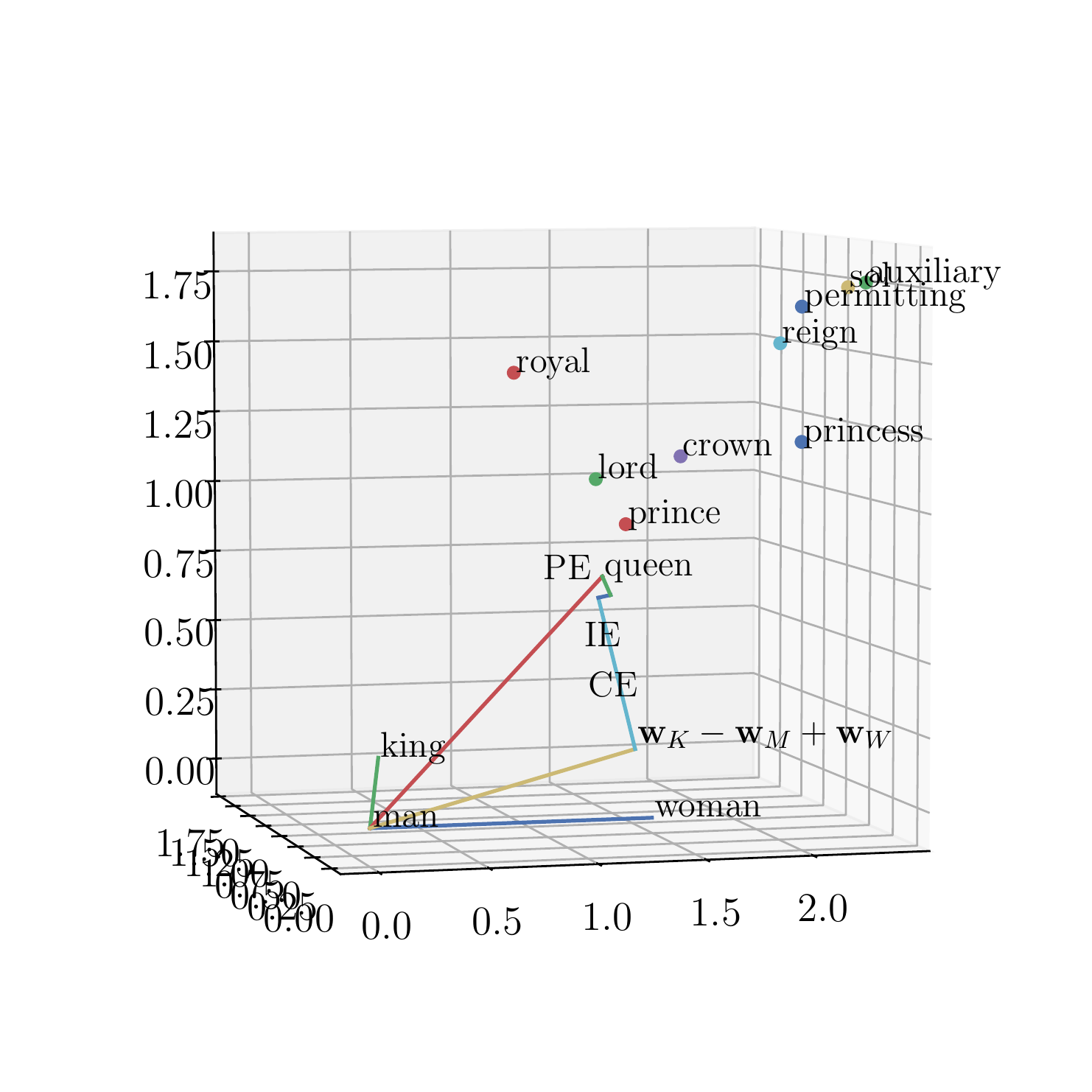}
    \caption{The plot shows the same embeddings of Fig \ref{fig:the_gap}, now with the difference between  $\vv{K}-\vv{M}+\vv{W}$ and the embedding of $queen$ explained (see connecting ``zigzag'') as the sum of conditional independence error (CE), independence error (IE) and paraphrase error (PE). As anticipated, their sum is smallest for \textit{queen}. Related words are seen nearby, with unrelated words clustered further away. 
    Plot generated by fixing the $xy$ plane to contain $man$, $king$, $queen$ and all other vectors plotted relatively, i.e. the $z$-axis captures any component off the $xy$-plane. Values are computed from the ``text8'' corpus \cite{text8}.}
    \label{fig:gap_explained}
    \vspace{-10pt}
\end{figure}

\subsection{Dependence error in analogies}\label{sec:similar_dep}

Dependence error terms for analogies (\ref{eq:lem_analogies}) bear an important distinction from those in one-word paraphrases (\ref{eq:lem1}). When a word set $\mathcal{W}$ is paraphrased by a single word $w_*$, the dependence error comprises a conditional independence term ($\bmu{\sigma}^\msc{W}$) and a mutual independence term ($\tau^\msc{W}\bmu{1}$) that bear no obvious relationship to one another and can only cancel by chance, which is low in high dimensions. However, (\ref{eq:lem_analogies}) contains offsetting pairs of each component ($\bmu{\sigma}^\msc{W}, \bmu{\sigma}^\msc{W_*}, \tau^\msc{W}, \tau^\msc{W_*}$), i.e. terms of the same form that may cancel, thus word sets with \textit{similar dependence terms} will paraphrase with small overall dependence error. 

It is illustrative to consider the case $w_a\!\!=\!\!w_b$, $w_{a^*}\!\!\!=\!\!w_{b^*}\!$, corresponding to the trivial analogy ``$w_a$ is to $w_{a^*}$ as ``$w_a$ is to $w_{a^*}$\!'', which holds true with zero total error for any word pair. Considering specific error terms: the paraphrase error is zero since $p(c_j|\{w_a, w_{a^*}\!\})\!=\!p(c_j|\{w_{a^*\!},w_a\}),\ \forall c_j\!\in\!\mathcal{E}$, thus the net dependence error is also zero. However, individual dependence error terms, e.g. $\log\tfrac{p(w_a, w_{a^*})}{p(w_a)p(w_{a^*})}$, are generally non-zero. This therefore proves existence of a case in which non-zero dependence error terms negate one another to give a negligible net dependence error.

\subsection{Analogies in \textit{explicit} embeddings} 

As with paraphrases, analogical relationships in embeddings stem from relationships between columns of $\mathbf{PMI}$.
\begin{corollary}\label{cor:analogy_explicit}
    Cor \ref{cor:analogies_solved} applies to \textit{explicit} (full-rank) embeddings, i.e. columns of $\mathbf{PMI}$, with $\mathbf{C}\!=\!\mathbf{I}$ (the identity matrix).
\end{corollary}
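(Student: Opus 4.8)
The plan is to exhibit $\mathbf{PMI}$ itself as a (degenerate) factorisation and invoke Cor~\ref{cor:analogies_solved} directly, mirroring how Cor~\ref{cor:paraphrase_explicit} was obtained from Thm~\ref{thm:paraphrase}. Concretely, take $\mathbf{W}=\mathbf{PMI}$ and $\mathbf{C}=\mathbf{I}\in\RR{n\times n}$, so that $\mathbf{W}^\top\mathbf{C}=\mathbf{PMI}$ as required by (\ref{eq:w2v_clean}), with embedding dimension $d=n$. The columns of $\mathbf{W}$ are then precisely the columns $\textup{PMI}_i$ of $\mathbf{PMI}$, i.e. the \emph{explicit} embeddings of \citet{levy2014linguistic}.

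First I would verify the standing assumptions for this choice. A\ref{ass:full_rank} holds since $\mathbf{C}=\mathbf{I}$ trivially has full row rank; the motivating condition $d\!\ll\!n$ is irrelevant here because no dimension reduction takes place. Hence $\mathbf{C}^\dagger=(\mathbf{CC}^\top)^{-1}\mathbf{C}=\mathbf{I}$, so the projection $f=\mathbf{C}^\dagger(\cdot)$ is the identity map — linear, and therefore satisfying A\ref{ass:pseudo-inv} exactly (with equality, not approximation). A\ref{ass:all_log_probs} is independent of the factorisation and so is unaffected.

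With the hypotheses of Cor~\ref{cor:analogies_solved} met, its conclusion applies verbatim; substituting $\mathbf{C}^\dagger=\mathbf{I}$ and $\mathbf{w}_\bullet=\textup{PMI}_\bullet$ collapses it to exactly the column identity of Lem~\ref{lem:analogies}. Moreover, since the ambient space is the full-rank $\RR{n}$, there is no reconstruction error and no case-(ii) false positives (Sec~\ref{sec:FPs}), so the linear analogy relationship holds among explicit embeddings with error given solely by the paraphrase and dependence terms, as claimed.

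There is no real obstacle: the only point needing care is checking that the assumptions introduced for the low-rank setting are either vacuous ($d=n$) or hold exactly for the identity projection, so that nothing is lost in specialising Cor~\ref{cor:analogies_solved} to this case.
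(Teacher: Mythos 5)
Your proposal is correct and is essentially the paper's intended argument: the paper gives no explicit proof for this corollary, but it is the direct analogue of the proof of Cor~\ref{cor:paraphrase_explicit}, namely choosing the trivial factorisation $\mathbf{W}=\mathbf{PMI}$, $\mathbf{C}=\mathbf{I}$ so that $\mathbf{C}^\dagger=\mathbf{I}$ and Cor~\ref{cor:analogies_solved} reduces to the column-level identity of Lem~\ref{lem:analogies}. Your additional checks that A\ref{ass:full_rank} and A\ref{ass:pseudo-inv} hold exactly (and that reconstruction error and case-(ii) false positives vanish) are sound and simply make explicit what the paper leaves implicit.
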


\subsection{Analogies in W2V embeddings}\label{sec:analogy_w2v}

As with paraphrases (Sec \ref{sec:addition_w2v}), the results for analogies can be extended to W2V embeddings by including the \textit{shift} term appropriately throughout. Since the transformation parameters for analogies are of equal size (i.e. $|\mathcal{W}^+|=|\mathcal{W}^-|=1$), we find that all \textit{shift} terms cancel.
\begin{corollary}\label{cor:thm2_w2v}
	Cor \ref{cor:analogies_solved} applies to W2V embeddings replacing the projection $\mathbf{C}^\dagger(\cdot)$ with $f_{W2V}(\cdot)$.
\end{corollary}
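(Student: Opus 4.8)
The plan is to mirror, for W2V embeddings, the passage already used for \emph{explicit} embeddings (Cor \ref{cor:analogy_explicit}) and for one-word paraphrases in W2V (Cor \ref{cor:thm1_w2v}), namely: start from a PMI-column identity, substitute the W2V optimum $\vtv{i}{j} = \textup{PMI}(w_i,c_j) - \log k$ from (\ref{eq:w2v_pmi}), track how the $-\log k$ offsets accumulate over the sums of columns involved, show the offsets cancel because the analogy's transformation parameters have $|\mathcal{W}^+| = |\mathcal{W}^-| = 1$, and finally apply the (nonlinear) W2V projection $f_{W2V}$ in place of $\mathbf{C}^\dagger(\cdot)$, invoking A\ref{ass:pseudo-inv} so that the linear combination of PMI columns is carried over (approximately) to the corresponding linear combination of embeddings.

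Concretely, first I would take the PMI-column identity (\ref{eq:lem_analogies}) from Lem~\ref{lem:analogies}, valid whenever ``$w_a$ is to $w_{a^*}$ as $w_b$ is to $w_{b^*}$'' with transformation parameters $\mathcal{W}^+, \mathcal{W}^-$ of equal size, and rewrite each $\textup{PMI}_\bullet = \textup{PMI}(w_\bullet, c_j)$ appearing on both sides in terms of the W2V inner products, i.e. $\textup{PMI}(w_i,c_j) = \vtv{i}{j} + \log k$ (abusing notation to write the embedding inner product where the excerpt writes $\vv{i}^\top\mathbf{c}_j$). On the left we have a single column $\textup{PMI}_{b^*}$, contributing one $+\log k$; on the right we have the combination $\textup{PMI}_{a^*} - \textup{PMI}_{a} + \textup{PMI}_{b}$, contributing $(+1 - 1 + 1)\log k = +\log k$; the error terms $\bmu{\rho}, \bmu{\sigma}, \tau\bmu{1}$ are built from ratios of probabilities and contain no $\log k$. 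Hence the $\log k$ offsets on the two sides are identical and cancel, leaving an identity in the W2V inner-product columns that is term-for-term the same as (\ref{eq:lem_analogies}) but with $\textup{PMI}_\bullet$ replaced by the corresponding column of $\mathbf{W}^\top\mathbf{C} = \mathbf{SPMI}$. (This is precisely the point the text flags: ``Since the transformation parameters for analogies are of equal size ... all \textit{shift} terms cancel'', in contrast to Cor \ref{cor:thm1_w2v} where $|\mathcal{W}|>1$ leaves a residual $\log k(|\mathcal{W}|-1)\bmu{1}$ term.)

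Second, I would pass from this identity between columns of $\mathbf{SPMI}\in\RR{n\times n}$ to the statement about embeddings $\mathbf{w}_\bullet\in\RR{d}$. For the least-squares projection this is exactly multiplication by $\mathbf{C}^\dagger$ (as in Thm~\ref{thm:analogies}); for genuine W2V embeddings the map $\mathbf{M}_i\mapsto\mathbf{w}_i = f_{W2V}(\mathbf{M}_i)$ is nonlinear, so I invoke A\ref{ass:pseudo-inv} to assert that the linear relationship among the $\mathbf{SPMI}$ columns is approximately preserved among the embeddings, yielding
\begin{align*}
    \mathbf{w}_{b^*}
    =\
    \mathbf{w}_{a^*} - \mathbf{w}_{a} + \mathbf{w}_{b}
    + f_{W2V}\big(
        \bmu{\rho}^{\msc{W}^b, \msc{W}^b_*}
        - \bmu{\rho}^{\msc{W}^a, \msc{W}^a_*}
        + (\bmu{\sigma}^{\msc{W}^b} \sm\, \bmu{\sigma}^{\msc{W}^b_*})
        &- (\bmu{\sigma}^{\msc{W}^a} \sm\, \bmu{\sigma}^{\msc{W}^a_*})
        \nonumber\\
        &- ((\tau^{\msc{W}^b} \sm\, \tau^{\msc{W}^b_*}) - (\tau^{\msc{W}^a} \sm\, \tau^{\msc{W}^a_*}))\bmu{1}
    \big),
\end{align*}
and then specialise to the canonical choice $\mathcal{W}^+=\{w_{a^*}\}$, $\mathcal{W}^-=\{w_a\}$ (legitimate by Cor \ref{cor:transform_param_equiv}) to recover the compact form of Cor \ref{cor:analogies_solved} with $\mathbf{C}^\dagger(\cdot)$ replaced by $f_{W2V}(\cdot)$, as claimed. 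I would also note that Cor \ref{cor:thm2_w2v} as stated refers only to the \emph{projection} being swapped, so strictly the error terms themselves are unchanged from Cor \ref{cor:analogies_solved} — the whole content is the shift cancellation plus the A\ref{ass:pseudo-inv} transfer.

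The main obstacle is not the algebra — the shift bookkeeping is a one-line signed count — but the passage from $\RR{n}$ to $\RR{d}$ for true W2V embeddings, since $f_{W2V}$ is nonlinear and A\ref{ass:pseudo-inv} is only assumed, not proved (the excerpt explicitly concedes in a footnote that establishing A\ref{ass:pseudo-inv} for the W2V loss is out of scope). So the honest proof reduces to: (i) the exact shift-cancellation argument at the level of $\mathbf{SPMI}$ columns, which is rigorous; and (ii) an appeal to A\ref{ass:pseudo-inv} (and A\ref{ass:full_rank}) to carry the relationship into embedding space, which is where all the imprecision is quarantined. A secondary, purely cosmetic point to get right is consistency of notation: the paper writes the optimum as $\vv{i}^\top\mathbf{c}_j$ and also uses $\vtv{i}{j}$ for a primed-vector inner product in Sec \ref{sec:addition_w2v}, so in the write-up I would be careful to use $\mathbf{w}_i^\top\mathbf{c}_j = \textup{PMI}(w_i,c_j)-\log k$ throughout to avoid conflating the two matrices $\mathbf{W}$ and $\mathbf{C}$, whose non-equality the paper has already emphasised.
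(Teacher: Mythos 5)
Your proposal is correct and matches the paper's own (very brief) argument: the paper likewise extends Lem~\ref{lem:analogies}/Thm~\ref{thm:analogies} to W2V by carrying the shift through, observing that because the paraphrasing sets (equivalently the transformation parameters, $|\mathcal{W}^+|=|\mathcal{W}^-|=1$) have equal size the $\log k$ terms cancel exactly (one on the left versus $+1-1+1$ on the right), and then transferring to embedding space via $f_{W2V}$ under A\ref{ass:pseudo-inv}. Your explicit signed count of the shift offsets and your remark that all remaining imprecision is quarantined in A\ref{ass:pseudo-inv} are exactly the content the paper intends, just spelled out in more detail.
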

Thus, linear relationships between embeddings for analogies hold equally for W2V embeddings as for those derived without the \textit{shift} distortion. Whilst perhaps surprising, this is corroborative since linear analogical relationships have been observed extensively in W2V embeddings (e.g. \citet{levy2014linguistic}), as is now justified theoretically. Thus we know that analogies hold for W2V embeddings subject to higher order statistical relationships between words of the analogy as defined by the paraphrase and dependence errors.

\section{Conclusion}

In this work, we develop a probabilistically principled definition of \textit{paraphrasing} by which equivalence is drawn between words and word sets by reference to the distributions they induce over words around them. 
We prove that, subject to statistical dependencies, paraphrase relationships give rise to linear relationships between word embeddings that factorise PMI (including columns of the PMI matrix), and thus others that approximate such a factorisation, e.g. W2V and \textit{Glove}. 
By showing that paraphrases can be interpreted as \textit{word transformations}, we enable analogies to be mathematically defined and, thereby, properties of semantics to be translated into properties of word embeddings. This provides the first rigorous explanation for the presence of linear relationships between the word embeddings of analogies. 


In future work we aim to extend our understanding of the relationships between word embeddings to other applications of discrete object representation that rely on an underlying matrix factorisation, e.g. graph embeddings and recommender systems. Also, word embeddings are known to capture stereotypes present in corpora (\citet{bolukbasi2016man}) and future work may look at developing our understanding of embedding composition to foster principled methods to correct or \textit{debias} embeddings.

\section*{Acknowledgements}

We thank Ivana Bala\v{z}evi\'c and Jonathan Mallinson for helpful comments on this manuscript. Carl Allen was supported by the Centre for Doctoral Training in Data Science, funded by EPSRC (grant EP/L016427/1) and the University of Edinburgh.

\bibliography{example_paper}
\bibliographystyle{icml2019}

\appendix

\newpage
\section*{Appendices}

\section{The KL-divergence between induced distributions}\label{app:KL}
We consider the words found by minimising the difference KL-divergences considered in Section \ref{sec:paraphrases}. Specifically:
\begin{align*}
    w_*^{\mathsmaller{(1)}} 
    & = \argmin_{w_i\in\mathcal{E}}
    D_\mathsmaller{KL}[\,
    p(c_j| \mathcal{W})
     \,||\,
    p(c_j| w_i)
    \,]
    \\
    w_*^{\mathsmaller{(2)}} 
    & = \argmin_{w_i\in\mathcal{E}}
    D_\mathsmaller{KL}[\,
    p(c_j| w_i)
     \,||\,
    p(c_j| \mathcal{W})
    \,]
\end{align*}

Minimising $D_\mathsmaller{KL}[\,
    p(c_j| \mathcal{W})
     \,||\,
    p(c_j| w_i)
    \,]$ identifies the word that induces a probability distribution over context words closest to that induced by $\mathcal{W}$, in which probability mass is assigned to $\smash{c_j}$ \textit{wherever} it is for $\mathcal{W}$. Intuitively, $w_*^{\mathsmaller{(1)}}$ is the word that most closely reflects \textit{all} aspects of $\mathcal{W}$, and may occur in contexts where no word $\winW$ does. 

Minimising $D_\mathsmaller{KL}[\,
    p(c_j| w_i)
     \,||\,
    p(c_j| \mathcal{W})
    \,]$  finds the word that induces a distribution over context words that is closest to that induced by $\mathcal{W}$, in which probability mass is assigned as broadly as possible but \textit{only} to those $c_j$ to which probability mass is assigned for $\mathcal{W}$. Intuitively, $w_*^{\mathsmaller{(2)}}$ is the word that reflects as many aspects of $\mathcal{W}$ as possible, as closely as possible, but nothing additional, e.g. by having other meaning that $\mathcal{W}$ does not.

\subsection{Weakening the paraphrase assumption}

    For a given word set $\mathcal{W}$, we consider the relationship between embedding sum $\vv{\msc{W}}$ and embedding $\vv{*}$ for the word $w_*\!\in\!\mathcal{E}$ that minimises the KL-divergence (we illustrate with $\Delta_{\mathsmaller{KL}}^{\msc{W},w_*}$).
    Exploring a weaker assumption than D\ref{def:paraphrase}, tests whether D\ref{def:paraphrase} might exceed requirement, and explores the relationship between $\vv{*}$ and $\vv{\msc{W}}$ as paraphrase error increases.
    \begin{theorem}[Weak paraphrasing]\label{thm:paraphrase_KL}
        For $w_*\!\in\!\mathcal{E}, \mathcal{W}\!\subseteq\!\mathcal{E}$, \emph{if} $w_*$ minimises $\Delta_{\mathsmaller{KL}}^{\msc{W},w_*} \!\doteq\! D_\mathsmaller{KL}[\,p(c_j|\mathcal{W})\,||\,p(c_j|w_*)\,]$,
        then:
        \begin{equation}
            {\vv{*}}^\top\mathbf{\hat{c}}
            \ =\ 
            {{\vv{\msc{W}}}}^{\top}{\mathbf{\hat{c}}}
            - \Delta_{\mathsmaller{KL}}^{\msc{W},w_*}
            + \hat{\sigma}^{\msc{W}}
            - \tau^\msc{W}
        \end{equation}
        where $\mathbf{\hat{c}}=\! \mathbb{E}_{j|\msc{W}}[\mathbf{c}_{j}]$, $\hat{\sigma}^{\msc{W}}=\! \mathbb{E}_{j|\msc{W}}[\bmu{\sigma}^{\msc{W}}_j]$ and $\mathbb{E}_{j|\msc{W}}[\cdot]$ denotes expectation under $p(c_j|\mathcal{W})$.
    \end{theorem}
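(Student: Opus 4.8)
The plan is to obtain the scalar identity by averaging Lemma~\ref{lem:paraphrase} against the distribution $p(c_j|\mathcal{W})$, i.e.\ projecting it onto the single direction $\hat{\mathbf{c}}=\mathbb{E}_{j|\msc{W}}[\mathbf{c}_{j}]$ rather than onto the whole row space of $\mathbf{C}$; this is exactly what lets us dispense with the element-wise smallness of $\bmu{\rho}^{\msc{W},w_*}$ (and with A\ref{ass:pseudo-inv}) that Thm~\ref{thm:paraphrase} relies on. Reading off the $j^{\text{th}}$ component of~(\ref{eq:lem1}) and using $\textup{PMI}(w_i,c_j)=\mathbf{w}_i^\top\mathbf{c}_j$ from~(\ref{eq:w2v_clean}), so that $(\textup{PMI}_\bullet)_j=\mathbf{w}_\bullet^\top\mathbf{c}_j$, gives for every $c_j\!\in\!\mathcal{E}$
\begin{equation*}
    \vv{*}^{\!\top}\mathbf{c}_j
    \ =\
    \vv{\msc{W}}^{\!\top}\mathbf{c}_j
    + \bmu{\rho}^{\msc{W},w_*}_j
    + \bmu{\sigma}^{\msc{W}}_j
    - \tau^\msc{W}\ .
\end{equation*}

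First I would apply $\mathbb{E}_{j|\msc{W}}[\cdot]$ to both sides. By linearity, $\mathbb{E}_{j|\msc{W}}[\vv{*}^{\!\top}\mathbf{c}_j]=\vv{*}^{\!\top}\hat{\mathbf{c}}$ and $\mathbb{E}_{j|\msc{W}}[\vv{\msc{W}}^{\!\top}\mathbf{c}_j]=\vv{\msc{W}}^{\!\top}\hat{\mathbf{c}}$; the conditional-dependence term is $\mathbb{E}_{j|\msc{W}}[\bmu{\sigma}^{\msc{W}}_j]=\hat{\sigma}^{\msc{W}}$ by definition; and $\tau^\msc{W}$, being constant in $j$, is unchanged. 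The only term requiring any work is $\mathbb{E}_{j|\msc{W}}[\bmu{\rho}^{\msc{W},w_*}_j]$: since $\bmu{\rho}^{\msc{W},w_*}_j=\log\tfrac{p(c_j|w_*)}{p(c_j|\mathcal{W})}$, this equals $-\sum_j p(c_j|\mathcal{W})\log\tfrac{p(c_j|\mathcal{W})}{p(c_j|w_*)}=-D_\mathsmaller{KL}[\,p(c_j|\mathcal{W})\,||\,p(c_j|w_*)\,]=-\Delta_{\mathsmaller{KL}}^{\msc{W},w_*}$. Substituting these four evaluations yields the claimed identity.

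There is really no obstacle here: the equality is just Lemma~\ref{lem:paraphrase} averaged against $p(c_j|\mathcal{W})$, and in fact it holds for \emph{any} $w_*\!\in\!\mathcal{E}$. The hypothesis that $w_*$ minimises $\Delta_{\mathsmaller{KL}}^{\msc{W},w_*}$ is what makes the statement useful: it forces the non-negative residual $\Delta_{\mathsmaller{KL}}^{\msc{W},w_*}$ to be as small as attainable over the dictionary, so $\vv{*}^{\!\top}\hat{\mathbf{c}}\approx\vv{\msc{W}}^{\!\top}\hat{\mathbf{c}}$ (up to the dependence residual $\hat{\sigma}^{\msc{W}}-\tau^\msc{W}$) is tightest for this choice, weakening D\ref{def:paraphrase} to a one-dimensional condition. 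The single point to get right is the orientation of the KL divergence: the average is taken under the \emph{first} argument $p(c_j|\mathcal{W})$, which is precisely why $\mathbb{E}_{j|\msc{W}}[\bmu{\rho}^{\msc{W},w_*}]$ collapses to $-\Delta_{\mathsmaller{KL}}^{\msc{W},w_*}$ rather than to a cross-entropy that does not simplify; the analogue for $w_*^{\mathsmaller{(2)}}$ would instead be obtained by averaging against $p(c_j|w_*)$ and projecting onto $\mathbb{E}_{j|w_*}[\mathbf{c}_j]$.
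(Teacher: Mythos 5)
Your proposal is correct and is essentially the paper's own argument in reverse order: the paper starts from $\Delta_{\mathsmaller{KL}}^{\msc{W},w_*}$ and expands the log-ratio via Lemma~\ref{lem:paraphrase} under $\mathbb{E}_{j|\msc{W}}[\cdot]$, whereas you average Lemma~\ref{lem:paraphrase} (with $\textup{PMI}(w_i,c_j)=\mathbf{w}_i^\top\mathbf{c}_j$) against $p(c_j|\mathcal{W})$ and identify the averaged $\bmu{\rho}^{\msc{W},w_*}$ term as $-\Delta_{\mathsmaller{KL}}^{\msc{W},w_*}$ — the same computation with the same ingredients. Your side remark that the identity holds for any $w_*$, the minimisation only making it informative, is also consistent with the paper's use of the result.
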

    \begin{proof}
        \begin{align*}
            \Delta_{\mathsmaller{KL}}^{\msc{W},w_*}
            & = 
            \mathsmaller{\sum_j} p(c_j| \mathcal{W})\log\tfrac{p(c_j| \mathcal{W})}{p(c_j| w_*)}
            \nonumber\\
            & \overset{(\ref{eq:lem1})}{=} 
            \mathbb{E}_{j|\msc{W}}[\, \mathsmaller{\sum_i}\text{PMI}(w_i,c_j)
            \nonumber\\ & \qquad\quad \ 
            -\text{PMI}(w_*,c_j) 
            + \bmu{\sigma}^{\msc{W}}_j
            - \tau^\msc{W}]
            \nonumber\\ 
            & = 
            \mathbb{E}_{j|\msc{W}}[{\vv{\msc{W}}}^\top\mathbf{c}_{j}
            - {\vv{*}}^\top\mathbf{c}_{j}] 
            + \hat{\sigma}^{\msc{W}}
            - \tau^\msc{W}\qedhere
        \end{align*} 
    \end{proof}
    Thus, the weaker paraphrase relationship specifies a hyperplane containing $\vv{*}$ and so does not uniquely define $\vv{*}$ (as under D\ref{def:paraphrase}) and cannot explain the observation of embedding addition for paraphrases (as suggested by \citet{gittens2017skip}). A similar result holds for $\Delta_{\mathsmaller{KL}}^{w_*,\msc{W}}$.
    In principle, Thm~\ref{thm:paraphrase_KL} could help locate embeddings of words that more loosely paraphrase $\mathcal{W}$, i.e. with increased paraphrase error.

\section{Proof of Lemma \ref{lem:paraphrase}}
\label{app:paraphrase_alt_proof}
\primelemma*
\begin{proof}
    \begin{align*}
        \text{PMI}(w_*, c_j)
         & - \sum_{\winW} \text{PMI}(w_i,\!c_j)
        \nonumber\\
        & =
          \log\frac{p(w_*|c_j)}{p(w_*)}
        - \log\prod_{\winW} \frac{p(w_i|c_j)}{p(w_i)}
        \nonumber\\
        & =
          \log \frac{p(w_*|c_j)}{\prod_{\msc{W}} p(w_i|c_j)}
        - \log \frac{p(w_*)}{\prod_{\msc{W}} p(w_i)}
        \nonumber\\
        & \qquad \qquad 
          \color{blue}
        + \log \frac{p(\mathcal{W}|c_j)}{p(\mathcal{W}|c_j)}
          \color{red}
        + \log \frac{p(\mathcal{W})}{p(\mathcal{W})}
          \color{black}
        \nonumber\\
       & =
          \log \frac{p(w_*|c_j)}
          {\color{blue}p(\mathcal{W}|c_j)\color{black}}
        - \log \frac{p(w_*)}
        {\color{red}p(\mathcal{W})\color{black}}
        \nonumber\\
        & \qquad 
        + \log \frac{\color{blue}p(\mathcal{W}|c_j)\color{black}}
        {\prod_{\msc{W}} p(w_i|c_j)}
        - \log \frac{\color{red}p(\mathcal{W})\color{black}}
        {\prod_{\msc{W}} p(w_i)}
        \nonumber\\
       & =
          \log \frac{p(c_j|w_*)}
          {\color{purple}p(c_j|\mathcal{W})\color{black}}
        + \log \frac{\color{blue}p(\mathcal{W}|c_j)\color{black}}
        {\prod_{\msc{W}} p(w_i|c_j)}
        \nonumber
        \\
        & \qquad \qquad \qquad \qquad \qquad\ \  
        - \log \frac{\color{red}p(\mathcal{W})\color{black}}
        {\prod_{\msc{W}} p(w_i)}
        \nonumber\\
        &  =
         \bmu{\rho}^{\msc{W}, w_*}_j  
         \ + \ \bmu{\sigma}^{\msc{W}}_j
         \ - \ \tau^\msc{W}\ ,
    \end{align*}
where, unless stated explicitly, products are with respect to all $w_i$ in the set indicated.
\end{proof}
Introduced terms are highlighted to show their evolution within the proof. At the step where terms are introduced, the existing error terms have no statistical meaning. This is resolved by introducing terms to which both error terms can be meaningfully related, through paraphrasing and independence.

\newpage
\section{Proof of Lemma \ref{lem:paraphrase_extended}}
\label{app:paraphrase_ext_proof}
\primelemmatwo*
\begin{proof}
    \begin{align*}
        \sum_{w_i\in\msc{W}_*} \text{PMI}&(w_i,\!c_j)
         - \sum_{w_i\in\msc{W}} \text{PMI}(w_i,\!c_j)
        \nonumber\\
        & =
          \log\!\prod_{w_i\in\msc{W}_*}\! \frac{p(w_i|c_j)}{p(w_i)}
        - \log\!\prod_{w_i\in\msc{W}}\! \frac{p(w_i|c_j)}{p(w_i)}
        \nonumber\\
        & =
          \log \frac{\mathsmaller{\prod_\msc{W}_*} p(w_i|c_j)}
          {\mathsmaller{\prod_\msc{W\phantom{_*}}} p(w_i|c_j)}
        - \log \frac{\mathsmaller{\prod_\msc{W}_*} p(w_i)}
          {\mathsmaller{\prod_\msc{W\phantom{_*}}} p(w_i)}
        \nonumber\\
        & \qquad \quad 
          \color{green}
        + \log \frac{p(\mathcal{W}_*|c_j)}{p(\mathcal{W}_*|c_j)}
          \color{orange}
        + \log \frac{p(\mathcal{W}_*)}{p(\mathcal{W}_*)}
          \color{black}
        \nonumber\\
        & \qquad \quad 
          \color{blue}
        + \log \frac{p(\mathcal{W}|c_j)}{p(\mathcal{W}|c_j)}
          \color{red}
        + \log \frac{p(\mathcal{W})}{p(\mathcal{W})}
        \nonumber\\
        & =
        + \log \frac
        {\color{green}p(\mathcal{W}_*|c_j)\color{black}}
        {\color{blue}p(\mathcal{W}|c_j)\color{black}}
        - \log \frac
        {\color{orange}p(\mathcal{W}_*)\color{black}}
        {\color{red}p(\mathcal{W})\color{black}}
        \nonumber\\
        & \qquad 
        + \log \frac
          {\mathsmaller{\prod_\msc{W}_*} p(w_i|c_j)}
          {\color{green}p(\mathcal{W}_*|c_j)\color{black}}
        - \log \frac
          {\mathsmaller{\prod_\msc{W}_*} p(w_i)}
          {\color{orange}p(\mathcal{W}_*)\color{black}}
        \nonumber\\
        & \qquad 
        + \log \frac
        {\color{blue}p(\mathcal{W}|c_j)\color{black}}
        {\mathsmaller{\prod_\msc{W}} p(w_i|c_j)}
        - \log \frac
        {\color{red}p(\mathcal{W})\color{black}}
        {\mathsmaller{\prod_\msc{W}} p(w_i)}
        \nonumber\\
        & =
        + \log \frac
        {\color{brown}p(c_j|\mathcal{W}_*)\color{black}}
        {\color{purple} p(c_j|\mathcal{W}  )\color{black}}
        \nonumber\\
        & \qquad 
        + \log \frac
        {\color{blue}p(\mathcal{W}|c_j)\color{black}}
        {\mathsmaller{\prod_\msc{W}} p(w_i|c_j)}
        - \log \frac
          {\color{green}p(\mathcal{W}_*|c_j)\color{black}}
          {\mathsmaller{\prod_\msc{W}_*} p(w_i|c_j)}
        \nonumber\\
        & \qquad 
        - \log \frac
        {\color{red}p(\mathcal{W})\color{black}}
        {\mathsmaller{\prod_\msc{W}} p(w_i)}
        + \log \frac
          {\color{orange}p(\mathcal{W}_*)\color{black}}
          {\mathsmaller{\prod_\msc{W}_*} p(w_i)}
        \nonumber\\
        &  =
            \bmu{\rho}^{\msc{W}, \msc{W}_*}_j
             + 
            \bmu{\sigma}^{\msc{W}}_j
             - 
            \bmu{\sigma}^{\msc{W}_*}_j
             - 
            (\tau^\msc{W}
             -
            \tau^{\msc{W}_*})\ ,
    \end{align*}
where, unless stated explicitly, products are with respect to all $w_i$ in the set indicated.
\end{proof}
The proof is analogous to that of Lem~\ref{lem:paraphrase}, with more terms added (as highlighted) to an equivalent effect. A key difference to single-word (or \textit{direct}) paraphrases (D\ref{def:paraphrase}) is that the paraphrase is between two word sets $\mathcal{W}$ and $\mathcal{W}_*$ that need not correspond to any single word. The paraphrase error $\bmu{\rho}^{\msc{W}, \msc{W}_*}$  compares the induced distributions of the two sets, following the same principles as direct paraphrasing, but with perhaps less interpretatability.

\newpage
\section{Alternate Proof of Corollary \ref{cor:word_transition}}
\label{app:cor_proof}
\primecorollary*
\begin{proof}
\begin{align*}
	\text{PMI}(w_{x^*},&c_j) - \text{PMI}(w_x,c_j)
	\\
	& =
	\log\frac{p(c_j|w_{x^*\!})}{p(c_j|w_x)}
		\ + \log \prod_{w_i\in\msc{W}^+\!}
		\frac{p(c_j|w_i)}{p(c_j|w_i)}
    \\
	& \qquad\qquad\qquad\qquad\quad
	+ \log \prod_{w_i\in\msc{W}^-\!}\frac{p(c_j|w_i)}
		             		{p(c_j|w_i)}  	
	\\
	& = 
	\sum_{w_i\in\msc{W}^+\!}\log p(c_j|w_i)
	 \ - \sum_{w_i\in\msc{W}^-\!}\log p(c_j|w_i)  
	 \\
	& \qquad\qquad\qquad\qquad\quad
		+ \log \frac{\mathsmaller{\prod_{\msc{W}_*}}p(c_j|w_i)}
		            {\mathsmaller{\prod_{\msc{W}\phantom{_*}}}p(c_j|w_i)}
  	\\
	& =
	   \sum_{w_i\in\msc{W}^+\!} \text{PMI}(w_i,c_j)
	 \ - \sum_{w_i\in\msc{W}^-\!} \text{PMI}(w_i,c_j)
	 \\
	& \qquad\qquad\qquad\quad
		+ \log 
		\frac{
		\mathsmaller{\prod_{\msc{W}_*}p(w_i|c_j)\, \prod_{\msc{W\phantom{_*}}}p(w_i)}}
		{\mathsmaller{\prod_{\msc{W\phantom{_*}}}p(w_i|c_j)\, \prod_{\msc{W}_*}p(w_i)}}
  	\\
	& =
	   \sum_{w_i\in\msc{W}^+\!} \text{PMI}(w_i,c_j)
	 \ - \sum_{w_i\in\msc{W}^-\!} \text{PMI}(w_i,c_j)
	\\
	& \qquad\quad\ \ 
		+ \log \frac{p(c_j|w_{x^*\!},W^-)}
		             {p(c_j|w_x,\phantom{^*\!}W^+)} 					        
	\\
	& \qquad\quad\ \ 
		+ \log \frac{\mathsmaller{\prod_{\msc{W}_*}}p(w_i|c_j)}
		             {p(w_{x^*},W^-|c_j)}
		       \frac{p(w_x    ,W^+|c_j)} 
		             {\mathsmaller{\prod_{\msc{W}}}p(w_i|c_j)}
	\\
	& \qquad\quad\ \ 
		- \log \frac{\mathsmaller{\prod_{\msc{W}_*}}p(w_i)}
		             {p(w_{x^*},W^-)}
		       \frac{p(w_x    ,W^+)} 
		             {\mathsmaller{\prod_{\msc{W}}}p(w_i)}
  	\\
	& =
	   \sum_{w_i\in\msc{W}^+\!} \text{PMI}(w_i,c_j)
	 \ - \sum_{w_i\in\msc{W}^-\!} \text{PMI}(w_i,c_j)
	 \\
	& \qquad\ \ \ 
	  	+ \bmu{\rho}^{\msc{W},\msc{W}_*\!}_j  		 	  	\ 		+ \bmu{\sigma}^{\msc{W}}_j
	    - \bmu{\sigma}^{\msc{W}_*}_j
	 - (\tau^{\msc{W}}
	 -  \tau^{\msc{W}_*})\, ,
\end{align*}
where, unless stated explicitly, products are with respect to all $w_i$ in the set indicated; and $\mathcal{W}\!=\!\{w_x\}\cup\mathcal{W}^+$, $\mathcal{W}_*\!=\!\{w_{x^*}\}\cup\mathcal{W}^-$ to lighten notation.
Multiplying by $\mathbf{C}^\dagger$ completes the proof.
\end{proof}






\end{document}